\documentclass{article} 
\usepackage[a4paper, total={6in, 9in}]{geometry}


\title{
    Hyperparameter Selection Methods for Fitted Q-Evaluation with
    Error Guarantee
}
\usepackage{times}



\author{
    Kohei Miyaguchi\thanks{
        \texttt{miyaguchi@ibm.com}
    }
    \\
    IBM Research - Tokyo\thanks{
        19-21 Hakozaki, Chuo-ku, Tokyo, Japan
    }
}
\input{definition.tex}

\begin{document}

\maketitle

\begin{abstract}
We are concerned with the problem of hyperparameter selection for the fitted Q-evaluation~(FQE).
FQE is one of the state-of-the-art method for offline policy evaluation~(OPE), which is essential to the reinforcement learning without environment simulators.
However, like other OPE methods, FQE is not hyperparameter-free itself and that undermines the utility in real-life applications.
We address this issue by proposing a framework of approximate hyperparameter selection~(AHS) for FQE, which defines a notion of optimality~(called \emph{selection criteria}) in a quantitative and interpretable manner without hyperparameters.
We then derive four AHS methods each of which has different characteristics such as distribution-mismatch tolerance and time complexity.
We also confirm in experiments that the error bound given by the theory matches empirical observations.
\end{abstract}

\begin{keywords}
    Fitted Q-evaluation, offline policy evaluation, hyperparameter selection.
\end{keywords}

\section{Introduction}

Offline policy evaluation~(OPE) is an indispensable component of the offline reinforcement learning~(RL),
which is a variant of reinforcement learning with special emphasis on
cost-sensitive real-life applications~\citep{levine2020offline},
such as autonomous vehicles, finance, healthcare and molecular discovery.

Almost all the offline RL algorithms involve their own hyperparameters.
For example, if we employ neural networks in policy learning,
we have to at least decide the network topology~(e.g., number of neurons and layers, to use the residual connection or not, to use the dense connection or the convolution), the activation functions, regularization weights and the optimizers~(e.g., SGD or Adam with their own hyperparameter choices).
The choice of the models such as neural network is also considered to be a hyperparameters.
OPE allows us to optimize or validate the choices over these hyperparameters
based only on offline datasets,
i.e., \emph{without access to environment simulators}.
This is especially useful if test run in the target environment is expensive.

However, with the current form of OPE,
we end up with another hyperparamter selection problem~\citep{paine2020hyperparameter}.
Note that one must employ a higher-order hyperparameter selection scheme to resolve it
and there is no apparent reason to expect such a higher-order problem to be easier than
that of the lower-order problem, i.e., offline RL itself.

In this paper, we seek for the OPE methods
that requires no hyperparameter.
In particular,
we consider a class of OPE algorithms generalizing Fitted Q Evaluation~(FQE)~\citep{le2019batch}
and
derive four hyperparameter selection methods for it
based on a newly introduced framework called \emph{approximate hyperparameter selection~(AHS)}.
Differences in their characteristics such as error guarantee and computational time
are investigated theoretically and empirically.

In Section~\ref{sec:preliminary}, we formally
introduce the notion of OPE, FQE and hyperparameter selection for FQE.
Then, in Section~\ref{sec:theory}, we present the main theoretical results,
namely the AHS framework and a key error bound useful to solve it.
Based on the error bound, in Section~\ref{sec:algorithms},
we derive four concrete algorithms with different error guarantees, computational complexities and time horizons,
corresponding to the first and the second row of Table~\ref{tab:comparison}.
We empirically demonstrate effectiveness and limitation of the derived methods in Section~\ref{sec:experiment}.
Finally, we compare our result with previous ones in Section~\ref{sec:related}
and then present a few concluding remarks in Section~\ref{sec:conclusion}.
\ifdefined\shortversion
    All the proofs of the propositions in this paper is given in the extended version.
\else
\fi

\newcommand{\crt}{\Ccal}
\begin{table}[t]
    \centering
    \newcommand{\mymark}{\heartsuit}
    \caption{
        Comparison of hyperparameter selection methods for FQE.
        The best value is shaded yellow for each column.
        The OPE error of each method is bounded by ${\displaystyle \Ocaltil\rbr{C D \cbr{\min_{X\in\Xcal}\varepsilon(X) + C n^{-1/4}}}}$,
        where $C\coloneqq \sum_{h=1}^{H}\gamma^{h-1}$ is the time constant, $\Xcal$ is the set of hyperparameters and $\Ocaltil(\cdot)$ is hiding the logarithmic factor of $H$, $\abs{\Xcal}$ and $1/\delta$.
        See Section~\ref{sec:method_comparison} and \ref{sec:related} for detailed discussion.
    }
    \label{tab:comparison}
    \newcommand{\cemph}[1]{\colorbox{yellow}{{ #1 }}}
    \renewcommand{\arraystretch}{1.2}
    {\small
    \begin{tabular}{l|cc|c|c}
        \toprule
        \multirow{2}{*}{Method} & \multicolumn{2}{c|}{Off-policy factor $D$}    & \multirow{2}{*}{Error metric $\varepsilon(X)$} & \multirow{2}{*}{Hyperparam.} \\
                                & $H<\infty$ & $H=\infty$ & & \\
        \midrule            
        RM / RM-FP      & \cemph{${\displaystyle \max_{1\le h\le H}\norm{w_h}_{2}}$}    & \cemph{$\norm{w}_{2}$}    & ${ \norm{\Delta X}_2}$                      & \cemph{None} \\ 
        KLM / KLM-FP    & ${\displaystyle \max_{1\le h\le H}\norm{w_h}_{\Fcal_\kappa}}$ & $\norm{w}_{\Fcal_\kappa}$ & \cemph{$\norm{\Delta X}_{\Fcal_\kappa^*}$}    & $\kappa$\\ 
        \cite{feng2019kernel}   & --- &  $\norm{w}_{\Fcal_\kappa}$        & \cemph{$\norm{Q^X-B_\pi Q^X}_{\Fcal_\kappa^*}$} & $\kappa$ \\ 
        \cite{zhang2021towards} & --- &  $\sqrt{C_{\rm BVFT}}\norm{w}_2$  & ${ \norm{Q^X-Q^\pi}_\infty}$                   & ($\epsilon_{\rm dct}$)$^a$\\ 
        \bottomrule
    \end{tabular}
    }
    \vskip 1mm
    \footnotesize{$^a$ A heuristic method for automatic selection of $\epsilon_{\rm dct}$ has been proposed.}
\end{table}

\section{Preliminary}
\label{sec:preliminary}

Let $\Pcal(\Xcal)$ denote the space of probability distributions on $\Xcal$,
where $\Xcal$ is an arbitrary measure space.
The order notation $\Ocal(\cdot)$ is used to hide universal multiplicative constants
in the limit of $n\to\infty$, where $n$ is the data size.
Let $\norm{\cdot}_p$ denotes the $L^p(\mu)$-norm of functions defined over $\Scal\times \Acal$, $p\ge 1$,
where $\mu$, $\Scal$ and $\Acal$ are defined in Section~\ref{sec:preliminary_ope}.
We assume $\Scal\times \Acal$ is a compact measurable space and $\supp(\mu)=\Scal\times \Acal$.
We also assume functions are suitably measurable.

\subsection{Offline Policy Evaluation~(OPE)}
\label{sec:preliminary_ope}

The goal of OPE is to estimate the value
of given policy $\pi$, $J(\pi)$,
with respect to the sequential decision making in the environment of interest $\Mcal$.

The policy value $J(\pi)$ quantifies
the expected rewards obtained within some time horizon
by sequentially taking action according to policy $\pi$.
It is formally defined as
\begin{align*}
    J(\pi)\coloneqq \EE^\pi\sbr{\sum_{h=1}^{H} \gamma^{h-1}r_h},
\end{align*}
where
$1\le H< \infty$\footnote{
    The infinite horizon case is handled later~(Section~\ref{sec:algo_infinite_horizon}). For now, we assume $H<\infty$.
}
and $\gamma\in[0,1]$ respectively denote the time horizon parameter and the discounting factor that determine how far in the future the rewards are taken into account as the value.
The sequence $\cbrinline{r_h}_{h\ge 1}$ denotes
the rewards generated with the policy $\pi$ and the environment $\Mcal$.
The symbol $\EE^\pi$ represents the expectation operator, highlighting the dependency on $\pi$.
Let $\Scal$ and $\Acal$ be the suitably-defined state space and action space, respectively.
We assume
the policy is identified with a state-conditional action distribution $\Scal\ni s\mapsto \pi(s)\in \Pcal(\Acal)$
and
the environment is a Markov decision process~(MDP) $\Mcal\equiv (S_1, R, T)$,
where
$S_1\in\Pcal(\Scal)$,
$R(s,a)\in \Pcal([0, 1])$ and $T(s,a)\in \Pcal(\Scal)$ respectively denote
the initial state distribution,
the conditional reward distribution
and the conditional succeeding-state distribution
given state-action pair $(s,a)\in\Scal\times \Acal$.
Thus, the rewards are subject to the following chain of distributional equations,
$s_1\sim S_1$,
$a_{h}\sim \pi(s_{h})$,
$r_{h}\sim R(s_h,a_h)$,
$s_{h+1}\sim T(s_h,a_h)$,
$h\ge1$.
For convenience, we denote by $P_h$, $1\le h<\infty$, the marginal distribution of $(s_h,a_h)$ induced with $(\Mcal, \pi)$
and by $\nu\coloneqq (1-\gamma)\sum_{h=1}^H\gamma^{h-1}P_h$ its discounted average.

The major constraint of OPE is that the environmental parameters $(T, R)$ are unknown
and $J(\pi)$ must be inferred with an offline dataset $\Dcal$.
We assume the dataset consists of $n$ transition tuples $\Dcal\coloneqq\cbrinline{(\stil_i,\atil_i,\rtil_i,\stil'_i)}_{i=1}^n$
sampled with an unknown query distributions $\mu\in\Pcal(\Scal\times \Acal)$
such that $(\stil_i,\atil_i)\sim \mu$, $\rtil_i\sim R(\stil_i,\atil_i)$, $\stil'_i\sim T(\stil_i,\atil_i)$,
$1\le i\le n$.
We sometimes abuse the notation and write $(\stil_i,\atil_i,\rtil_i,\stil'_i)\sim \mu$
and $\Dcal\sim \mu^n$.

Finally, we introduce a common assumption of OPE,
the condition of sufficient exploration.
\begin{assumption}[Sufficient exploration]
    \label{asm:sufficient_exploration}
    For $1\le h\le H$, the distribution of $(s_h, a_h)$ is absolutely continuous
    with respect to $\mu$,
    i.e., the Radon--Nikodym derivative $w_h(s,a)\coloneqq \frac{\d P_h}{\d \mu}(s,a)$ and
    its discounted average $w(s,a)\coloneqq \frac{\d \nu}{\d \mu}(s,a)$ exist.
\end{assumption}
In other words, it is guaranteed 
the data distribution $\mu$ has positive measure on any measurable events $E\subset \Scal\times \Acal$
that can be happened to the target state-action pairs $(s_h, a_h)$ at some time steps $1\le h\le H$.
Note that this assumption is significantly relaxed if we know a parametric model of MDPs that contains the environment $\Mcal$,
such as linear MDPs~(e.g., Assumption 1 in \cite{duan2020minimax}).
However, we do not assume we know such models in the present study
as our goal is to select the best hyperparameter from data, not from domain knowledge.

\subsection{Fitted Q-Evaluation~(FQE)}
The fitted Q-evaluation is a simple, yet effective OPE algorithm proposed by
\cite{le2019batch}.
It solves a slightly more general problem than OPE,
the estimation of the action-value function.
The action-value function $Q^\pi(s,a)$ quantifies
the value of taking given action $a\in\Acal$ at given state $s\in\Scal$
and then following the policy $\pi$ to make all the subsequent decisions.
It is formally defined as
\begin{align*}
    Q^\pi(s,a)\coloneqq \EE^\pi\sbr{\sum_{h=1}^{H} \gamma^{h-1}r_h \,\middle|\,s_1=s,\,a_1=a}.
\end{align*}
Note that the policy value $J(\pi)$ is computable with $Q^\pi$,
\begin{align*}
    J(\pi)=J(Q^\pi)\coloneqq \EE[Q^\pi(s_1,\pi(s_1))].
\end{align*}

FQE is derived from the recursive property of $Q^\pi$.
More specifically, the action-value function $Q^\pi$ is known to be satisfying the Bellman equation,
$Q^\pi_h=B_\pi Q^\pi_{h-1}$, $h\ge 1$, where
$Q^\pi_h$ is the action-value function with the time horizon set to $H=h$
and
$B_\pi$ is the Bellman operator given by
\begin{align*}
    (B_\pi f)(s,a)
    &\coloneqq
    \EE\sbr{R(s,a)+\gamma f(s',\pi(s'))\,\middle|\, s'\sim T(s,a)},
\end{align*}
for $f:\Scal\times \Acal\to \RR$,
$s\in\Scal$ and $a\in\Acal$.
This implies by induction
\begin{align*}
    Q^\pi=Q^\pi_H=B_\pi^H 0=\underbrace{B_\pi(B_\pi(\cdots(B_\pi}_{\text{$H$ times}} 0))) .
\end{align*}
Therefore, a natural idea to estimate $Q^\pi$ is
to construct an approximate Bellman operator $X\approx B_\pi$
and then apply it to the zero function $H$ times
to obtain the action-value function estimate $Q^X\coloneqq X^H 0\approx Q^\pi$.
We refer to this abstract procedure as MetaFQE~(Algorithm~\ref{alg:metafqe}).
FQE is derived with one of the most natural implementations of $X$, 
the least-squares regression operator,
\begin{align}
    &X_{\mathrm{FQE}}(\pi,\Dcal;\Qcal): f\mapsto 
    \argmin_{g\in \Qcal}\sum_{(s,a,r,s')\in \Dcal} \abs{r+\gamma f(s',\pi(s')) - g(s,a)}^2
    \label{eq:fqe_operator}
\end{align}
with $\Qcal$ being a hypothetical set of action-value functions.

Note that FQE has a hyperparameter $\Qcal$
that heavily influences the output of the algorithm.
It is usually given as a parametric model of functions 
such as linear functions and neural networks.
Moreover,
practical implementations of the FQE opeartor often involve
a number of hyperparameters other than $\Qcal$
such as regularization terms and optimizers.

\begin{algorithm2e}[t]
    {\small
    \caption{
        Meta Fitted Q-Evaluation~(MetaFQE)
    }
    \label{alg:metafqe}
    \KwInput {Approximate Bellman operator $X$}
    \KwOutput {Action value function estimate $Q^X$}
    $Q^X_0\gets 0$\;
    \For {$h=1,2,...,H$} {
        $Q^X_h\gets X Q^X_{h-1}$\;
    }
    \Return $Q^X\gets Q^X_H$\;
    }
\end{algorithm2e}

\subsection{Hyperparameter Selection for MetaFQE}

Observe that a single hyperparameter configuration of FQE
is corresponding to a single operator $X$.
Hence,
the hyperparameter selection of FQE
is equivalent to select the best operator $X_*$ from given candidates of operators $\Xcal$.
Generalizing this idea,
we first introduce the scope of operators $\Omega$
from which the candidate sets $\Xcal$ are taken.

\begin{definition}[Range-bounded operators]
    Let $C\coloneqq \sum_{h=1}^H \gamma^{h-1}$.
    Let $\Omega$ denote
    the set of all the operators on
    $[0, C]$-valued functions over $\Scal\times \Acal$,
    \begin{align*}
        \Omega\coloneqq
        \cbr{X:[0, C]^{\Scal\times \Acal}\to [0,C]^{\Scal\times \Acal}}.
    \end{align*}
\end{definition}
The restriction on the range boundedness is justified
since
the true action-value functions $Q^\pi_h$, $1\le h\le H$, 
are all bounded to $[0, C]^{\Scal\times \Acal}$.
Note that one can modify any $X$ to satisfy the boundedness 
by composing it with a clipping function,
$\Xtil=\texttt{clip}\circ X$,
where 
$\texttt{clip}(f)(s,a)\coloneqq \max\cbrinline{0,\min\cbrinline{C, f(s,a)}}$,
$s\in\Scal$, $a\in\Acal$.

Our goal is formally defined as solving the following problem.
\begin{problem}[Ideal hyperparameter selection for FQE]
    \label{problem:hyperparameter_selection_for_fqe}
    Given $\pi$, $\Dcal$ and $\Xcal\subset \Omega$ with $|\Xcal|<\infty$,
    find $X_*\in \Xcal$
    such that
    \begin{align}
        \abs{\Delta J(Q^{X_*})}
        = \min_{X\in \Xcal}
        \abs{\Delta J(Q^X)},
        \label{eq:hyperparameter_selection}
    \end{align}
    where
    $Q^X\coloneqq X^H Q$ is the Q-function generated by $X$
    and
    $\Delta J(Q)\coloneqq J(Q)-J(Q^\pi)$ is the OPE error
    associated with $Q:\Scal\times \Acal\to [0,C]$.
\end{problem}
Without loss of generality,
we assume each $X\in\Xcal$ is independent of $\Dcal$.
Although the operators are often learned from the dataset as in FQE,
the independence is guaranteed with the training-validation split $\Dcal=\Dcal_{\mathrm{train}}+\Dcal_{\mathrm{valid}}$. 
The subsequent analyses and discussions are also applicable to this setting
simply by replacing $\Dcal$ with $\Dcal_{\mathrm{valid}}$.

\section{Theoretical Results}
\label{sec:theory}
Problem~\ref{problem:hyperparameter_selection_for_fqe}
cannot be always solved since the OPE error $\absinline{\Delta J(Q^X)}$
is difficult to estimate in general.
To address this issue, we first introduce a relaxation of Problem~\ref{problem:hyperparameter_selection_for_fqe},
namely the \emph{approximate hyperparameter selection}~(AHS) problem.
Then, we present a useful theoretical tool to solve it,
which is heavily exploited later~(in Section~\ref{sec:algorithms}) to derive hyperparameter-selection algorithms.

\subsection{Approximate Hyperparameter Selection Framework}
To define a relaxation of Problem~\ref{problem:hyperparameter_selection_for_fqe},
we first introduce the notions of the selection criteria
and the optimality of choices.

\begin{definition}[Selection criterion]
    \label{def:approximate_criteria}
    A function $\crt:\Omega\to \RR$
    is said to be a selection criterion
    when the following conditions are met.
    \begin{enumerate}
        \item For all $X\in \Omega$,
            $\abs{\Delta J(Q^{X})}\le \crt(X)$.
        \item $\crt(B_\pi)=0$.
    \end{enumerate}
\end{definition}

\newcommand{\subopt}{\mathrm{Subopt}}
\begin{definition}[$\crt$-optimality]
    \label{def:c_optimal_choice}
    Let $\Xcal\subset \Omega$ be a set of candidate operators
    and $\crt$ be any selection criterion.
    Let $\Qhat:\Scal\times \Acal\to [0,C]$ represent a random function.
    We say a function $\Qhat$ is $(\Xcal, \crt)$-optimal,
    or $\crt$-optimal if there is no ambiguity, if and only if
    \begin{align}
        \abs{\Delta J(\Qhat)} \le \min_{X\in\Xcal}\crt(X) + o_P(1),
        \label{eq:ahs_optimality}
    \end{align}
    where $o_P(1)$ denotes a diminishing term, $\PP\cbrinline{\absinline{o_P(1)}>\epsilon}\overset{n\to \infty}{\longrightarrow} 0$ for all $\epsilon>0$.
    Equivalently, $\Qhat$ is $(\Xcal, \crt)$-optimal if and only if it
    achieves asymptotically zero $\crt$-suboptimality in probability,
    $\mathrm{Subopt}(\Qhat;\Xcal,\crt)\overset{P}{\to}0$,
    where the suboptimality is defined as
    \begin{align*}
        \subopt(\Qhat;\Xcal,\crt)
        \coloneqq \max\cbr{0,\, \abs{\Delta J(\Qhat)} - \min_{X\in\Xcal}\crt(X)}.
    \end{align*}
\end{definition}

Now, we are ready to define a relaxation of Problem~\ref{problem:hyperparameter_selection_for_fqe}.
\begin{problem}[$\crt$-approximate hyperparameter selection ($\crt$-AHS)]
    \label{problem:approximate_hyperparameter_selection}
    Let
    $\crt$ be a given selection criterion.
    For given $\pi$, $\Dcal$ and $\Xcal$ with 
    $\absinline{\Xcal}<\infty$,
    find a $\crt$-optimal Q-function $\Qhat$.
\end{problem}

A few remarks follow in order.
Firstly, $\crt$-AHS is in fact a relaxation of Problem~\ref{problem:hyperparameter_selection_for_fqe}.
This is seen from that, in \eqref{eq:ahs_optimality},
we have weakened the solution condition
replacing the RHS of \eqref{eq:hyperparameter_selection}
with a probabilistic upper bound, $\min_{X\in\Xcal} \absinline{\Delta J(Q^X)}\le \min_{X\in\Xcal} \crt(X)+o_P(1)$.
Specifically, all the solutions $X_*$ of Problem~\ref{problem:hyperparameter_selection_for_fqe}
induce $\Qhat=Q^{X_*}$ with zero $\crt$-suboptimality with any $\crt$.

Secondly, the solutions of $\crt$-AHS are asymptotically consistent.
If the candidate set $\Xcal$ happens to contain the true operator $B_\pi$
and $\Qhat$ is $\crt$-optimal,
we have $\Delta J(\Qhat)\to 0$ in probability.
Moreover, the OPE error of $\Qhat$ is exactly characterized
with the $\crt$-suboptimality,
$\absinline{\Delta J(\Qhat)}=\subopt(\Qhat;\Xcal,\crt)$.

Thirdly, even if $\Xcal$ does not contain the true operator,
the OPE error is bounded by
$\absinline{\Delta J(\Qhat)}\le \min_{X\in\Xcal}\crt(X)+o_P(1)$.
Therefore, the asymptotic quality of the selection
depends on the tightness of the criterion $\crt(X)$.

Finally, the values of criteria $\crt(X)$ themselves are not necessarily tractable.
The minimum requirement is that we have an algorithm that gives a $\crt$-optimal Q-function.
In fact, all the algorithms presented in this paper minimize computationally \emph{intractable} criteria.

\subsection{Master Error Bound for AHS}

Now, we show upper bounds useful to solve AHS for FQE.
To this end, we first introduce the notion of the dual norms of $\Fcal$ and the link functions.
\begin{definition}[Dual norm]
    Let $\Fcal\subset L^1(\mu)$ be a Banach space.
    The dual norm of $\Fcal$ for functions is given by
    \begin{align*}
        \norm{g}_{\Fcal^*}
        &\coloneqq \sup_{\norminline{f}_\Fcal\le 1}\EE_{(s,a)\sim \mu}[f(s,a)\,g(s,a)],
    \end{align*}
    where $g:\Scal\times \Acal\to \RR$.
    Moreover, 
    abusing the notation, the dual norm for operators is given by
    \begin{align*}
        \norm{X}_{\Fcal^*}
        &\coloneqq \sup_{f:\Scal\times \Acal\to [0, C]} \norm{Xf}_{\Fcal^*}, \quad X\in\Omega.
    \end{align*}
\end{definition}
\begin{definition}[Link function]
    We say a function $\varphi:\RR_{\ge 0}\to \RR$ is a link function
    if it is nonnegative, nondecreasing, continuous, concave,
    and satisfying $\varphi(0)=0$.
\end{definition}

\begin{proposition}[Master error bound]
    \label{prop:metafqe_error_bound}
    Let $\Fcal$ be a dense Banach subspace of $L^1(\mu)$.
    Let $\Delta X\coloneqq X -B_\pi$ be the Bellman error operator of $X\in\Omega$.
    Then, under Assumption~\ref{asm:sufficient_exploration},
    there exists a link function $\varphi_\Fcal:\RR_{\ge 0}\to\RR_{\ge 0}$ such that
    $\forall y\ge 0$, $\varphi_\Fcal (y) \le C\max_{1\le h\le H}\norm{w_h}_\Fcal y$
    and
    \begin{align}
        \abs{\Delta J(Q^X)}
        &\le
        \varphi_\Fcal \rbr{
            \frac1C\sum_{h=1}^H \gamma^{H-h} \norm{\Delta X Q^X_{h-1}}_{\Fcal^*}
        }
        \label{eq:metafqe_error_bound}
    \end{align}
    for all $X\in\Omega$.
\end{proposition}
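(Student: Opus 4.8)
The plan is to decompose the MetaFQE error by telescoping, transport it through the value functional $J$ onto the data distribution $\mu$ by a change of measure, bound each resulting term by duality, and collect the residual (genuinely unavoidable) approximation error into the link function $\varphi_\Fcal$. First I would write $Q^X_h \coloneqq X^h 0$ and $Q^\pi_h \coloneqq B_\pi^h 0$, so that $Q^X=Q^X_H$, $Q^\pi=Q^\pi_H$, and (by induction, since $X\in\Omega$) every $Q^X_h$ is $[0,C]$-valued. The key observation is that $B_\pi$ is affine: $B_\pi f = r + \gamma P_\pi f$ with $r(s,a)\coloneqq\EE[R(s,a)]$ and $(P_\pi f)(s,a)\coloneqq\EE_{s'\sim T(s,a)}[f(s',\pi(s'))]$, so $B_\pi f - B_\pi g = \gamma P_\pi(f-g)$. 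Hence the iterate errors $e_h\coloneqq Q^X_h - Q^\pi_h$ obey $e_0=0$ and $e_h=(\Delta X)Q^X_{h-1}+\gamma P_\pi e_{h-1}$, which unrolls to
\[
    Q^X - Q^\pi \;=\; e_H \;=\; \sum_{h=1}^{H}\gamma^{H-h}\,P_\pi^{H-h}\,(\Delta X)\,Q^X_{h-1}.
\]

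Next I would apply $J(Q)=\EE_{(s_1,a_1)\sim P_1}[Q(s_1,a_1)]$ to both sides. Since iterating $P_\pi$ from the initial marginal reproduces the later marginals, $\EE_{(s_1,a_1)\sim P_1}[(P_\pi^{k}g)(s_1,a_1)]=\EE_{(s,a)\sim P_{1+k}}[g(s,a)]$, this yields $\Delta J(Q^X)=\sum_{h=1}^{H}\gamma^{H-h}\,\EE_{P_{H-h+1}}[(\Delta X)Q^X_{h-1}]$, and Assumption~\ref{asm:sufficient_exploration} rewrites each term against $\mu$ through the density $w_{H-h+1}$, giving
\[
    \abs{\Delta J(Q^X)} \;\le\; \sum_{h=1}^{H}\gamma^{H-h}\,\abs{\EE_{(s,a)\sim\mu}\!\sbr{\,w_{H-h+1}\,(\Delta X)Q^X_{h-1}\,}} .
\]
For a single term, for any $f\in\Fcal$ I would split $\EE_\mu[w_j g]=\EE_\mu[fg]+\EE_\mu[(w_j-f)g]$ and bound the two pieces by $\norm{f}_\Fcal\norm{g}_{\Fcal^*}$ and $\norm{w_j-f}_1\norm{g}_\infty$ respectively; taking the infimum over those $f$ within $L^1(\mu)$-distance $\epsilon$ of $w_j$ gives $\abs{\EE_\mu[w_j g]}\le\omega(\epsilon)\norm{g}_{\Fcal^*}+\epsilon\norm{g}_\infty$, where $\omega(\epsilon)\coloneqq\max_{1\le j\le H}\inf\cbr{\norm{f}_\Fcal : f\in\Fcal,\ \norm{w_j-f}_1\le\epsilon}$ is finite for every $\epsilon>0$ precisely because $\Fcal$ is dense in $L^1(\mu)$. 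Here $g=(\Delta X)Q^X_{h-1}$ satisfies $\norm{g}_\infty\le 2C$ (both $X$ and $B_\pi$ send $[0,C]$-valued functions to functions bounded by $2C$), and $\sum_{h=1}^{H}\gamma^{H-h}=C$, so summing the per-term bounds with a common $\epsilon$ and then optimizing over $\epsilon$ reproduces exactly \eqref{eq:metafqe_error_bound} with
\[
    \varphi_\Fcal(y) \;\coloneqq\; \inf_{\epsilon>0}\sbr{\,C\,\omega(\epsilon)\,y \;+\; 2C^2\epsilon\,}.
\]

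It then remains to verify that $\varphi_\Fcal$ is a link function with the stated linear majorant. It is an infimum of affine functions of $y$, hence concave and nondecreasing; it is nonnegative with $\varphi_\Fcal(0)=0$; it is continuous on $(0,\infty)$ by concavity and finiteness, and continuous at $0$ because $\limsup_{y\to0}\varphi_\Fcal(y)\le 2C^2\epsilon$ for every $\epsilon>0$; and $\varphi_\Fcal(y)\le C\max_{1\le h\le H}\norm{w_h}_\Fcal\,y$ by letting $\epsilon\to0$ and using $\omega(\epsilon)\le\max_h\norm{w_h}_\Fcal$ (the inequality being vacuous when some $w_h\notin\Fcal$). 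I expect the main obstacle to be precisely this last point: the weights $w_h$ need not lie in $\Fcal$, so a direct dual-norm bound is unavailable, and $\varphi_\Fcal$ has to be built as the interpolation between the $\Fcal^*$-duality estimate and the crude $L^\infty$ estimate — the density of $\Fcal$ in $L^1(\mu)$ being exactly what makes this interpolation a genuine (finite, continuous, concave) link function. A secondary subtlety is that $B_\pi$ is affine, not linear, so only its linear part $\gamma P_\pi$ transports errors across the MetaFQE iterations, which is why the telescoping is carried out with $\gamma P_\pi$ rather than with $B_\pi$ itself.
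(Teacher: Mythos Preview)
Your proposal is correct and follows essentially the same route as the paper: the telescoping identity (the paper states it as a separate lemma, $\Delta J(Q^X)=\sum_{h=1}^H\gamma^{h-1}\EE^\pi[(\Delta X Q^X_{H-h})(s_h,a_h)]$, which is your formula after the substitution $h\mapsto H-h+1$), then the same $w_j=(w_j-f)+f$ split with H\"older on one piece and $\Fcal$-duality on the other, and finally the same infimum construction of $\varphi_\Fcal$. The only substantive difference is that you parametrize the interpolation dually to the paper --- you fix the $L^1$-approximation error $\epsilon$ and minimize $\norm{f}_\Fcal$, whereas the paper fixes the norm budget $x$ and minimizes $\norm{w_h-f}_1$ via $\varphi_h(x)\coloneqq\inf_{\norm{f}_\Fcal\le x}\norm{w_h-f}_1$, setting $\varphi_\Fcal(y)=C\inf_{x>0}\{\sum_h\gamma^{h-1}\varphi_h(x)+xy\}$; both constructions yield valid link functions satisfying the stated linear majorant, though the paper's version is marginally tighter since it weights the per-step errors by $\gamma^{h-1}$ rather than taking a uniform $\max_j$. (A minor remark: the paper uses $\norm{\Delta X f}_\infty\le C$ rather than your $2C$, since both $Xf$ and $B_\pi f$ are $[0,C]$-valued.)
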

\ifdefined\shortversion\else
\begin{proof}
    See Section~\ref{sec:app_proof_metafqe_error_bound}.
\end{proof}
\fi

Proposition~\ref{prop:metafqe_error_bound}
suggests
the RHS of \eqref{eq:metafqe_error_bound}
can be used as a selection criterion as long as $\Fcal$ is dense in $L^1(\mu)$.
We refer to the argument of the link function,
\newcommand{\precrt}{\Ccaltil}
\begin{align*}
    \precrt_\Fcal(X)\coloneqq \frac1C\sum_{h=1}^H \gamma^{H-h} \norm{\Delta X Q^X_{h-1}}_{\Fcal^*},
\end{align*}
as the \emph{precriterion} of $X$ with respect to $\Fcal$.
Since $\varphi_\Fcal$ is a link function,
the minimization of the RHS is possible if the minimization of the precriterion is.
Thus, to solve $\crt$-AHS,
we confine our focus to the construction of upper bounds on the precriterion.

\section{Algorithms}
\label{sec:algorithms}

Proposition~\ref{prop:metafqe_error_bound}
suggests a spectrum of OPE error bounds
corresponding to different error-measuring Banach spaces $\Fcal$.
In general,
there is a trade-off in the choice of $\Fcal$.
If $\Fcal$ is more expressive,
the link function $\varphi_\Fcal$ is smaller
but the precriterion is larger.
To see this, consider two Banach spaces $\Fcal$ and $\Gcal$
such that $\norm{\cdot}_\Fcal\ge \norm{\cdot}_\Gcal$ (i.e., $\Gcal$ is more expressive than $\Fcal$).
Then,
we have
$\precrt_\Fcal( X )\le \precrt_\Gcal( X )$
for all $X\in\Omega$ by the definition of the dual norm
and
$\varphi_\Fcal(y)\ge \varphi_\Gcal(y)$ for all $y\ge 0$
by definition (see the proof of Proposition~\ref{prop:metafqe_error_bound}).
Below, we discuss the algorithms induced by typical choices on $\Fcal$.

\subsection{A Failed Attempt}
The most trivial and most expressive choice of $\Fcal$
is $\Fcal=L^1(\mu)$.
In this case, the link function is explicitly calculated
as
$\varphi_{L^1(\mu)}(y)=y$ for $y\ge 0$.
However, the precriterion $\precrt_{L^1(\mu)}$ is difficult
to estimate or minimize
since the corresponding dual norm is the
essential supremum $\norm{\cdot}_\infty$.

\subsection{Regret Minimization (RM)}
A slightly less expressive space is $\Fcal=L^2(\mu)$.
Note that $L^2(\mu)$ is dense in $L^1(\mu)$.
In this case, the dual space is itself, $\Fcal^*=\Fcal$,
and
the precriterion is the sum of the $L^2(\mu)$-norms $\norminline{\Delta X Q^X_{h-1}}_2$.
As shown below,
the norms are simplified using the squared Bellman loss
\newcommand{\sbl}{\Lcal_{\Dcal,\pi}}
\begin{align*}
    &\sbl( X;f)
    \coloneqq 
    \frac{1}{n} \sum_{(s,a,r,s')\in \Dcal, a'\sim \pi(s')}
    \cbr{
        r+\gamma f(s', a')- ( X f)(s, a)
    }^2.
\end{align*}
\begin{proposition}[Squared-loss representation of $L^2(\mu)$-norm]
    \label{prop:l2_residual_identity}
    For any $f:\Scal\times \Acal\to \RR$, we have
    \begin{align}
        \norm{\Delta X f}_{2}^2
        &=
        \EE\sbr{
            \sbl(X;f) - \sbl(B_\pi;f)
        }.
        \label{eq:l2_residual_identity}
    \end{align}
\end{proposition}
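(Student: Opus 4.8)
The plan is to reduce \eqref{eq:l2_residual_identity} to a pointwise bias--variance decomposition, with the key fact being that the regression target appearing in $\sbl$ has conditional mean exactly $B_\pi f$.

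\textbf{Step 1 (pass to the population loss).} Since $\Dcal\sim\mu^n$ consists of i.i.d.\ tuples and $a'\sim\pi(s')$ is drawn independently of $\Dcal$, linearity of expectation collapses the empirical average into a single-sample expectation:
\begin{align*}
    \EE\sbr{\sbl(X;f)}
    = \EE\sbr{\rbr{r+\gamma f(s',a') - (Xf)(s,a)}^2},
\end{align*}
where on the right $(s,a)\sim\mu$, $r\sim R(s,a)$, $s'\sim T(s,a)$, $a'\sim\pi(s')$; the same holds with $B_\pi$ in place of $X$. Note that $Xf$ and $B_\pi f$ are \emph{deterministic} functions here, because each $X\in\Xcal$ is independent of $\Dcal$ and $f$ is a fixed function; all functions are bounded on the compact space $\Scal\times\Acal$ and $r\in[0,1]$, so every expectation is finite.

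\textbf{Step 2 (identify the conditional mean).} Write the regression target as $Y\coloneqq r+\gamma f(s',a')$. By the definition of the Bellman operator $B_\pi$,
\begin{align*}
    \EE\sbr{Y\,\middle|\,s,a}
    = \EE\sbr{R(s,a)} + \gamma\,\EE\sbr{f(s',\pi(s'))\,\middle|\,s'\sim T(s,a)}
    = (B_\pi f)(s,a).
\end{align*}
Then for any deterministic $g:\Scal\times\Acal\to\RR$, conditioning on $(s,a)$ and completing the square,
\begin{align*}
    \EE\sbr{(Y-g(s,a))^2\,\middle|\,s,a}
    = \var\sbr{Y\,\middle|\,s,a} + \rbr{(B_\pi f)(s,a)-g(s,a)}^2,
\end{align*}
the cross term vanishing precisely because $\EE[Y\mid s,a]=(B_\pi f)(s,a)$.

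\textbf{Step 3 (subtract).} Apply the last identity with $g=Xf$ and with $g=B_\pi f$, take expectations over $(s,a)\sim\mu$, and subtract. The variance terms $\EE_\mu[\var(Y\mid s,a)]$ cancel and the term for $g=B_\pi f$ contributes nothing, leaving
\begin{align*}
    \EE\sbr{\sbl(X;f)-\sbl(B_\pi;f)}
    = \EE_{(s,a)\sim\mu}\sbr{\rbr{(Xf)(s,a)-(B_\pi f)(s,a)}^2}
    = \norm{\Delta X f}_2^2,
\end{align*}
which is \eqref{eq:l2_residual_identity}.

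\textbf{Anticipated obstacle.} There is no deep difficulty: this is a completing-the-square argument. The only points requiring care are (i) justifying the reduction of the empirical average to a single population expectation, which uses the i.i.d.\ structure of $\Dcal$ and the independence of the sampled $a'$ so that the target's conditional mean is genuinely $B_\pi f$; (ii) keeping track that $Xf$ and $B_\pi f$ are non-random given the fixed $f$, so the cross term is zero in expectation rather than merely small; and (iii) integrability, which is immediate from boundedness on the compact domain.
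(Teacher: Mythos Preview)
Your proof is correct and follows essentially the same route as the paper: both reduce the empirical loss to a single-sample population expectation and then apply the bias--variance decomposition of $\EE[(Y-g(s,a))^2]$ around the conditional mean $\EE[Y\mid s,a]=(B_\pi f)(s,a)$, so that the variance term cancels upon subtraction and only $\norm{Xf-B_\pi f}_2^2$ remains. The paper phrases the same computation via auxiliary quantities $m_i(X,f)=\EE[\ztil_i(X,f)\mid \stil_i,\atil_i]$ and a conditional second moment, but the underlying argument is identical to your completing-the-square step.
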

\ifdefined\shortversion\else
\begin{proof}
    See Section~\ref{sec:app_proof_l2_residual_identity}.
\end{proof}
\fi
Note that the identity~\eqref{eq:l2_residual_identity}
cannot be used directly to
evaluate the precriterion $\precrt_{L^2(\mu)}$ since we have the true Bellman operator $B_\pi$
on the RHS, which is unknown.
Instead, we introduce a proxy loss called the Bellman regret,
\newcommand{\regret}{\mathrm{Regret}_{\Dcal,\pi}}
\begin{align*}
    \regret( X;\Xcal,f)
    \coloneqq
    \sbl( X;f) - \min_{A\in \Xcal}\sbl(A;f),
\end{align*}
which substitutes $B_\pi$ with the best approximate operator in $\Xcal$.
The Bellman regret is then used to compute the total regret,
\newcommand{\totregret}{\mathrm{\overline{Regret}}_{\Dcal,\pi}}
\begin{align}
    \totregret( X;\Xcal)
    \coloneqq
    \frac1C\sum_{h=1}^H \gamma^{H-h}
    \sqrt{\regret( X;\Xcal,Q^X_{h-1})},
    \label{eq:bellman_regret_sum}
\end{align}
which approximate the precriterion $\precrt_{L^2(\mu)}(X)$.
We refer to the minimization of the total regret as Regret Minimization~(RM)~(Algorithm~\ref{alg:fqe_rm} in the appendix).
In fact, RM is shown to be optimal with respect to
a selection criterion
    $\crt_2(X)\coloneqq 3\varphi_{L^2(\mu)}\rbrinline{
        \norm{\Delta X}_2 
    }$.

\newcommand{\Xrm}{{\Xhat_{\rm RM}}}
\begin{proposition}[Optimality of RM]
    \label{prop:optimality_of_rm}
    Let $\Xrm\coloneqq\argmin_{X\in\Xcal}\totregret(X;\Xcal)$.
    Then, $Q^\Xrm$ is $\crt_2$-optimal.
    The suboptimality is bounded by
    \begin{align*}
        \subopt(Q^\Xrm;\Xcal,\crt_2)
        &=
        \Ocal\rbr{C^2\max_{1\le h\le H}\norm{w_h}_{2}\rbr{\frac{\ln (H\abs{\Xcal}/\delta)}{n}}^{1/4}}
    \end{align*}
    with probability $1-\delta$, where $\delta\in(0,1)$.
\end{proposition}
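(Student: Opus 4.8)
My strategy has three parts: (i) a uniform concentration bound that sandwiches the Bellman regret $\regret(X;\Xcal,Q^X_{h-1})$ between the squared distances $\norm{\Delta X Q^X_{h-1}}_2^2$; (ii) a transfer of this sandwich from $\Xrm$ to a minimizer of $\crt_2$, exploiting that $\Xrm$ minimizes $\totregret(\cdot;\Xcal)$; and (iii) Proposition~\ref{prop:metafqe_error_bound} specialized to $\Fcal=L^2(\mu)$ (dense in $L^1(\mu)$), whose link $\varphi_{L^2(\mu)}$ satisfies $\varphi_{L^2(\mu)}(y)\le C\max_{1\le h\le H}\norm{w_h}_2\,y$ and, being concave with $\varphi_{L^2(\mu)}(0)=0$, is subadditive.

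First I would construct the good event. Since each $X\in\Xcal$ is independent of $\Dcal$ and $Q^X_{h-1}=X^{h-1}0$ is a deterministic function of $X$, each quantity $R_{A,X,h}\coloneqq\sbl(A;Q^X_{h-1})-\sbl(B_\pi;Q^X_{h-1})$, $A,X\in\Xcal$, $1\le h\le H$, is an average of $n$ i.i.d.\ random variables of range $\Ocal(C^2)$ (each summand is a difference of squares of $\Ocal(C)$-bounded quantities, as $Q^X_{h-1}\in[0,C]^{\Scal\times\Acal}$). By Hoeffding's inequality and a union bound over the $\abs{\Xcal}^2H$ such quantities there is an event $G$ with $\PP(G)\ge1-\delta$ on which $\abs{R_{A,X,h}-\norm{\Delta A Q^X_{h-1}}_2^2}\le\epsilon_n$ for all $A,X,h$, with $\epsilon_n=\Ocal\rbr{C^2\sqrt{\ln(H\abs{\Xcal}/\delta)/n}}$, where I have used Proposition~\ref{prop:l2_residual_identity} to identify $\EE R_{A,X,h}=\norm{\Delta A Q^X_{h-1}}_2^2$. (Note $\sbl(B_\pi;\cdot)$ is not computable, but it is a well-defined random variable and it cancels in $\regret(X;\Xcal,Q^X_{h-1})=R_{X,X,h}-\min_{A\in\Xcal}R_{A,X,h}$, so the argument is legitimate.) Unwinding the minimum, on $G$ I obtain for all $X\in\Xcal$ and $h$:
\begin{align*}
    \norm{\Delta X Q^X_{h-1}}_2^2 &\le \regret(X;\Xcal,Q^X_{h-1}) + \min_{A\in\Xcal}\norm{\Delta A Q^X_{h-1}}_2^2 + 2\epsilon_n, \\
    \regret(X;\Xcal,Q^X_{h-1}) &\le \norm{\Delta X Q^X_{h-1}}_2^2 + 2\epsilon_n .
\end{align*}

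Then I would chain these. Fix $X^*\in\argmin_{X\in\Xcal}\crt_2(X)$. Taking square roots in the first inequality (via $\sqrt{a+b+c}\le\sqrt a+\sqrt b+\sqrt c$), using $\min_{A\in\Xcal}\norm{\Delta A Q^\Xrm_{h-1}}_2\le\norm{\Delta X^*}_2$ (valid because $Q^\Xrm_{h-1}$ is $[0,C]$-valued and $\norm{\Delta X^*}_2=\sup_{f:\Scal\times\Acal\to[0,C]}\norm{\Delta X^* f}_2$), and averaging over $h$ with the weights $\gamma^{H-h}/C$, which sum to $1$, I get $\precrt_{L^2(\mu)}(\Xrm)\le\totregret(\Xrm;\Xcal)+\norm{\Delta X^*}_2+\Ocal(\sqrt{\epsilon_n})$. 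Now $\totregret(\Xrm;\Xcal)\le\totregret(X^*;\Xcal)$ by definition of $\Xrm$, while the second inequality together with $\precrt_{L^2(\mu)}(X^*)\le\norm{\Delta X^*}_2$ gives $\totregret(X^*;\Xcal)\le\norm{\Delta X^*}_2+\Ocal(\sqrt{\epsilon_n})$; hence $\precrt_{L^2(\mu)}(\Xrm)\le 2\norm{\Delta X^*}_2+\Ocal(\sqrt{\epsilon_n})$ on $G$.

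Finally I would invoke Proposition~\ref{prop:metafqe_error_bound} under Assumption~\ref{asm:sufficient_exploration}: by subadditivity and the bound $\varphi_{L^2(\mu)}(y)\le C\max_h\norm{w_h}_2\,y$,
\begin{align*}
    \abs{\Delta J(Q^\Xrm)}
    &\le \varphi_{L^2(\mu)}\rbr{\precrt_{L^2(\mu)}(\Xrm)} \\
    &\le 2\varphi_{L^2(\mu)}\rbr{\norm{\Delta X^*}_2} + \Ocal\rbr{C\max_{1\le h\le H}\norm{w_h}_2\sqrt{\epsilon_n}} .
\end{align*}
Since $\min_{X\in\Xcal}\crt_2(X)=3\varphi_{L^2(\mu)}(\norm{\Delta X^*}_2)\ge2\varphi_{L^2(\mu)}(\norm{\Delta X^*}_2)$, subtracting leaves $\subopt(Q^\Xrm;\Xcal,\crt_2)\le\Ocal\rbr{C\max_h\norm{w_h}_2\sqrt{\epsilon_n}}=\Ocal\rbr{C^2\max_{1\le h\le H}\norm{w_h}_2\rbr{\frac{\ln(H\abs{\Xcal}/\delta)}{n}}^{1/4}}$ on $G$, i.e.\ with probability $1-\delta$ — exactly the claimed bound; the constant $3$ in $\crt_2$ is precisely what absorbs the factor-$2$ slack so that no $\norm{\Delta X^*}_2$ term survives. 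As this holds for every $\delta\in(0,1)$, choosing e.g.\ $\delta=1/n$ drives the right-hand side to $0$, so $\subopt(Q^\Xrm;\Xcal,\crt_2)\overset{P}{\to}0$ and $Q^\Xrm$ is $\crt_2$-optimal. I expect the main obstacle to be the concentration step: one must confirm that the (validation-)data dependence of $\Xrm$ does not spoil the uniform bound — it does not, because $G$ is defined uniformly over the fixed finite family $\Xcal$ — verify that the squared-loss summands really have range $\Ocal(C^2)$ so the final rate carries the correct powers of $C$, and then carry the model-misspecification term $\min_{A\in\Xcal}\norm{\Delta A Q^\Xrm_{h-1}}_2^2$ cleanly through the square root and the concave link $\varphi_{L^2(\mu)}$.
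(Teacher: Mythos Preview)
Your proposal is correct and follows essentially the same route as the paper: Hoeffding plus a union bound over $(A,X,h)$ to control $\sbl(\cdot;Q^X_{h-1})$ (the paper packages this as Lemma~\ref{lem:approximate_l2_dual_norm_with_regret} and Corollary~\ref{cor:l2_precriterion_estimation_error}), then the minimizer property of $\Xrm$ to transfer to the comparator, then Proposition~\ref{prop:metafqe_error_bound} with the subadditive link $\varphi_{L^2(\mu)}$. The only cosmetic difference is bookkeeping: the paper takes square roots first and carries a uniform bound on $\bigl|\norm{\Delta X f}_2-\sqrt{\regret(X;\Xcal,f)}\bigr|$, arriving at $\precrt_{L^2(\mu)}(\Xrm)\le 3\min_X\norm{\Delta X}_2+\Ocal(\cdot)$, whereas your sandwich on squared quantities yields the slightly sharper $2\norm{\Delta X^*}_2+\Ocal(\cdot)$; both are absorbed by the factor $3$ in $\crt_2$.
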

\ifdefined\shortversion\else
\begin{proof}
    See Section~\ref{sec:app_proof_optimality_of_rm}.
\end{proof}
\fi

\subsection{Kernel Loss Minimization~(KLM)}
As an even less expressive example,
we take $\Fcal$ as reproducing kernel Hilbert spaces~(RKHS) generated by some kernel function
$\kappa:(\Scal\times \Acal)^2\to\RR$.
For simplicity, we assume some regularities of the kernel $\kappa$.
\begin{assumption}
    \label{asm:normalized_kernel}
    $\kappa(u,u')$ is continuous, symmetric and positive definite
    with respect to $u,u'\in\Scal\times \Acal$.
    Moreover, it is normalized, i.e.,
    $\sup_{u\in\Scal\times \Acal}\abs{\kappa(u,u)}\le 1$.
\end{assumption}
A typical example of such kernels inducing dense subspaces of $L^1(\mu)$
is the Gaussian kernels, $\kappa(u,u')=\exp(-\abs{u-u'}_2^2/\sigma^2)$,
where $\abs{\cdot}_2$ is the $\ell^2$-norm of Euclidean space and $\sigma>0$ is a scale parameter.
The following proposition gives a useful identity of the RKHS precriterion
based on such kernels.
\begin{proposition}[Kernel representation of dual RKHS norms]
    \label{prop:kernel_loss_identity}
    Let $\Fcal_\kappa$ be the RKHS generated by $\kappa$.
    Then, for any $f:\Scal\times \Acal\to \RR$, we have
    \begin{align}
        \norm{f}_{\Fcal_\kappa^*}^2 =\mathop{\EE}_{u,\util\sim \mu}\sbr{\kappa(u,\util)\,f(u)\,f(\util)}.
        \label{eq:RKHS_precriterion_identity}
    \end{align}
\end{proposition}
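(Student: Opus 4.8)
The plan is to recognize the dual-norm supremum as the operator norm of a bounded linear functional on $\Fcal_\kappa$, invoke the Riesz representation theorem to replace that supremum by the $\Fcal_\kappa$-norm of a representer, and then compute the representer and its norm explicitly via the reproducing property. First I would unfold the definition of the dual norm: $\norm{f}_{\Fcal_\kappa^*}=\sup_{\norm{h}_{\Fcal_\kappa}\le 1}\EE_{u\sim\mu}[h(u)f(u)]$. The functional $L_f:h\mapsto \EE_{u\sim\mu}[h(u)f(u)]$ is linear and bounded on $\Fcal_\kappa$: by the reproducing property $h(u)=\langle h,\kappa(u,\cdot)\rangle_{\Fcal_\kappa}$ together with the normalization $\norm{\kappa(u,\cdot)}_{\Fcal_\kappa}=\sqrt{\kappa(u,u)}\le 1$ from Assumption~\ref{asm:normalized_kernel}, one gets $|L_f(h)|\le \norm{h}_{\Fcal_\kappa}\,\EE_{u\sim\mu}|f(u)|$, and the last expectation is finite since $\Scal\times\Acal$ is compact and $f$ is bounded (this is the only regularity on $f$ that is actually used). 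By Riesz there is a unique $g_f\in\Fcal_\kappa$ with $L_f(h)=\langle h,g_f\rangle_{\Fcal_\kappa}$ for all $h$, and since the unit ball of $\Fcal_\kappa$ is symmetric, $\norm{f}_{\Fcal_\kappa^*}=\sup_{\norm{h}_{\Fcal_\kappa}\le 1}\langle h,g_f\rangle_{\Fcal_\kappa}=\norm{g_f}_{\Fcal_\kappa}$.

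Next I would identify the representer as the (kernel-mean-embedding-type) Bochner integral $g_f=\EE_{u\sim\mu}[f(u)\,\kappa(u,\cdot)]$, which is a well-defined element of $\Fcal_\kappa$ because $\EE_{u\sim\mu}\norm{f(u)\kappa(u,\cdot)}_{\Fcal_\kappa}\le \EE_{u\sim\mu}|f(u)|<\infty$. Indeed, for any $h\in\Fcal_\kappa$, interchanging the expectation with the continuous linear map $\langle h,\cdot\rangle_{\Fcal_\kappa}$ gives $\langle h,\EE_u[f(u)\kappa(u,\cdot)]\rangle_{\Fcal_\kappa}=\EE_u[f(u)\langle h,\kappa(u,\cdot)\rangle_{\Fcal_\kappa}]=\EE_u[f(u)h(u)]=L_f(h)$, so by uniqueness this integral equals $g_f$. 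Finally, expanding $\norm{g_f}_{\Fcal_\kappa}^2=\langle g_f,g_f\rangle_{\Fcal_\kappa}$ and interchanging expectations with the inner product once more,
\[
\norm{f}_{\Fcal_\kappa^*}^2=\mathop{\EE}_{u\sim\mu}\mathop{\EE}_{\util\sim\mu}\big[f(u)f(\util)\,\langle \kappa(u,\cdot),\kappa(\util,\cdot)\rangle_{\Fcal_\kappa}\big]=\mathop{\EE}_{u,\util\sim\mu}\big[\kappa(u,\util)\,f(u)\,f(\util)\big],
\]
where the last equality is the reproducing identity $\langle \kappa(u,\cdot),\kappa(\util,\cdot)\rangle_{\Fcal_\kappa}=\kappa(u,\util)$; this is exactly \eqref{eq:RKHS_precriterion_identity}.

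The only points demanding care are measure-theoretic: that the $\mu$-averages of $u\mapsto f(u)\kappa(u,\cdot)$ (and of $g_f$ against itself) are legitimate Bochner integrals in $\Fcal_\kappa$, and that they commute with bounded linear functionals and with the inner product. All of these follow from the normalization $\sup_u\kappa(u,u)\le 1$, the compactness of $\Scal\times\Acal$, and boundedness/integrability of $f$, so I do not anticipate any substantive obstacle beyond stating these interchanges precisely.
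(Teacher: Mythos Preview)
Your argument is correct and takes a genuinely different route from the paper. The paper proceeds via Mercer's theorem: it expands $\kappa(u,\util)=\sum_j \sigma_j e_j(u)e_j(\util)$ with $\{e_j\}$ an orthonormal basis of $L^2(\mu)$, writes any $f\in\Fcal_\kappa$ as $\sum_j\beta_j e_j$ with $\norm{f}_{\Fcal_\kappa}^2=\sum_j\beta_j^2/\sigma_j$, and then solves the dual-norm supremum explicitly over the coefficient ball $\{\sum_j\beta_j^2/\sigma_j\le 1\}$ by Cauchy--Schwarz, obtaining $\norm{g}_{\Fcal_\kappa^*}^2=\sum_j\sigma_j(\EE_\mu[e_j g])^2$, which is then repackaged as the double expectation. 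Your approach instead treats the dual norm abstractly as the operator norm of the linear functional $h\mapsto\EE_\mu[hf]$, invokes Riesz, identifies the representer as the kernel-mean-embedding Bochner integral $\EE_\mu[f(u)\kappa(u,\cdot)]$, and reads off its squared norm via the reproducing identity. Your route is shorter and avoids the eigendecomposition machinery; it also relies only on the boundedness $\sup_u\kappa(u,u)\le 1$ and $f\in L^1(\mu)$, whereas the Mercer route genuinely uses continuity of $\kappa$ and compactness of $\Scal\times\Acal$ (both available here via Assumption~\ref{asm:normalized_kernel} and the standing hypotheses). The paper's approach, on the other hand, is more self-contained for readers unfamiliar with Bochner integration, and the explicit spectral coordinates it produces can be convenient if one later wants to reason about the eigenvalues $\sigma_j$. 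One small caveat: your boundedness-of-$f$ justification (``compact domain and $f$ bounded'') tacitly assumes more than the bare hypothesis ``any $f:\Scal\times\Acal\to\RR$''; but the paper's proof has the same implicit integrability requirement, and in the applications all functions are $[0,C]$-valued, so this is not a substantive gap.
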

\ifdefined\shortversion\else
\begin{proof}
    See Section~\ref{sec:app_proof_kernel_loss_identity}.
\end{proof}
\fi

To approximate the dual norm 
based on the data $\Dcal$,
we introduce the kernel Bellman loss,
\newcommand{\kbl}[1]{\Lcal_{#1,\Dcal,\pi}}
\begin{align*}
    \kbl{\kappa}(X;f)
    \coloneqq
    \frac1{n^2}
    \sum_{
        \substack{
            (u,r,u')\in\Dcal_\pi \\
            (\util,\rtil,\util')\in\Dcal_\pi
        }
    }
    \Big[
    \kappa(u,\util)\times 
    \cbr{r+\gamma f(u')-Xf(u)}
    \cbr{\rtil+\gamma f(\util')-Xf(\util)}
    \Big],
\end{align*}
where $(u,r,u')\in\Dcal_\pi$ indicates
$u$ is a state-action pair before transition in $\Dcal$, $r$ is the corresponding reward,
and $u'$ is the pair of the state after transition and the action drawn from $\pi$.
Summing up the kernel Bellman losses, we have
the total kernel loss
\newcommand{\totkbl}[1]{\Lcalbar_{#1,\Dcal,\pi}}
\begin{align}
    \totkbl{\kappa}(X)
    \coloneqq
    \frac1C\sum_{h=1}^H\gamma^{H-h} \sqrt{\kbl{\kappa}(X;Q^X_{h-1})}
    \label{eq:total_kernel_loss}
\end{align}
as an approximation of $\precrt_{\Fcal_\kappa}(X)$.
We refer to the minimization of the total kernel loss as Kernel Loss Minimization~(KLM)~(Algorithm~\ref{alg:fqe_klm} in the appendix).
In fact, KLM is optimal with respect to 
$\crt_\kappa(X)\coloneqq \varphi_{\Fcal_\kappa}(\norm{\Delta X}_{\Fcal_\kappa^*})$,
which is a selection criterion if $\Fcal_\kappa$ is dense in $L^1(\mu)$.

\newcommand{\Xklm}[1]{\Xhat_{\mathrm{KLM}( #1 )}}
\begin{proposition}[Optimality of KLM]
    \label{prop:optimality_of_klm}
    Let $\Xklm{\kappa}\coloneqq \argmin_{X\in\Xcal} \totkbl{\kappa}(X)$.
    Then, $\Xklm{\kappa}$ is $\crt_{\kappa}$-optimal.
    The suboptimality is bounded by
    \begin{align*}
        \subopt(Q^{\Xklm{\kappa}};\Xcal,\crt_\kappa)
        &=
        \Ocal\rbr{C^2\max_{1\le h\le H}\norm{w_h}_{\Fcal_\kappa}\rbr{\frac{\ln (H\abs{\Xcal}/\delta)}{n}}^{1/4}}
    \end{align*}
    with probability $1-\delta$,
    where $\delta\in(0,1)$.
\end{proposition}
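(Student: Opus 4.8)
The plan is to reuse the three-step template of the proof of Proposition~\ref{prop:optimality_of_rm}, with the squared Bellman loss replaced by the kernel Bellman loss $\kbl{\kappa}$: (i) control $\abs{\Delta J(Q^X)}$ by the link function of the precriterion $\precrt_{\Fcal_\kappa}(X)$, (ii) show that $\totkbl{\kappa}$ estimates $\precrt_{\Fcal_\kappa}$ uniformly over the finite set $\Xcal$, and (iii) chain the two inequalities. Step (i) is immediate: by Proposition~\ref{prop:metafqe_error_bound} (which requires $\Fcal_\kappa$ dense in $L^1(\mu)$ and Assumption~\ref{asm:sufficient_exploration}), monotonicity of $\varphi_{\Fcal_\kappa}$, and the bound $\precrt_{\Fcal_\kappa}(X)\le\norm{\Delta X}_{\Fcal_\kappa^*}$ (each $Q^X_{h-1}$ is $[0,C]$-valued and $\tfrac1C\sum_{h=1}^H\gamma^{H-h}=1$), we get $\abs{\Delta J(Q^X)}\le\varphi_{\Fcal_\kappa}(\precrt_{\Fcal_\kappa}(X))\le\crt_\kappa(X)$ for all $X\in\Omega$; together with $\crt_\kappa(B_\pi)=\varphi_{\Fcal_\kappa}(0)=0$ this also confirms $\crt_\kappa$ is a selection criterion.

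The core of the argument is a uniform concentration bound for $\kbl{\kappa}$. Since each $X\in\Xcal$ is independent of $\Dcal$, the function $f=Q^X_{h-1}=X^{h-1}0$ is a deterministic $[0,C]$-valued function, so $\kbl{\kappa}(X;f)$ depends only on the i.i.d.\ sample; positive-definiteness of $\kappa$ moreover makes it nonnegative, so its square root is well defined. Writing $\delta_i\coloneqq r_i+\gamma f(u_i')-(Xf)(u_i)$ for the Bellman residuals and separating the diagonal $i=j$ from the off-diagonal terms, the conditional-independence identity $\EE[\delta_i\mid u_i]=-(\Delta X f)(u_i)$ and Proposition~\ref{prop:kernel_loss_identity} give $\EE[\kbl{\kappa}(X;f)]=\tfrac{n-1}{n}\norm{\Delta X f}_{\Fcal_\kappa^*}^2+\tfrac1n\EE[\kappa(u,u)\delta^2]$, so the bias is $\Ocal(C^2/n)$ (using $\abs{\kappa}\le1$ by Assumption~\ref{asm:normalized_kernel} and $\abs{\delta}=\Ocal(C)$). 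For the fluctuation I would treat $\kbl{\kappa}(X;f)$ as a bounded V-statistic: changing one sample perturbs at most $2n-1$ of the $n^2$ summands, each of size $\Ocal(C^2)$, so McDiarmid's inequality yields $\abs{\kbl{\kappa}(X;f)-\EE[\kbl{\kappa}(X;f)]}=\Ocal(C^2\sqrt{\ln(1/\delta)/n})$ with probability $1-\delta$. A union bound over the $H\abs{\Xcal}$ pairs $(X,h)$ then shows that, on an event of probability at least $1-\delta$, $\max_{X\in\Xcal}\max_{1\le h\le H}\abs{\kbl{\kappa}(X;Q^X_{h-1})-\norm{\Delta X Q^X_{h-1}}_{\Fcal_\kappa^*}^2}\le\eta_n$ with $\eta_n=\Ocal(C^2\sqrt{\ln(H\abs{\Xcal}/\delta)/n})$. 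I expect this to be the only real obstacle: $\kbl{\kappa}$ is a second-order (paired-sample) statistic rather than a sum of independent terms, but a normalized kernel keeps the bounded-difference coefficient at $\Ocal(C^2/n)$, so McDiarmid suffices with no decoupling or sharper $U$-statistic inequality. (Note that, unlike the $L^2$ case, $\kbl{\kappa}$ targets $\norm{\Delta X f}_{\Fcal_\kappa^*}^2$ directly, with no $B_\pi$-dependent term, so no ``regret'' surrogate is needed.)

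Finally I would pass from squared norms to norms by applying $\abs{\sqrt a-\sqrt b}\le\sqrt{\abs{a-b}}$ termwise, which (again using $\tfrac1C\sum_h\gamma^{H-h}=1$) gives $\sup_{X\in\Xcal}\abs{\totkbl{\kappa}(X)-\precrt_{\Fcal_\kappa}(X)}\le\sqrt{\eta_n}=\Ocal(C(\ln(H\abs{\Xcal}/\delta)/n)^{1/4})$ on the good event; abbreviate this bound by $\epsilon_n$. For $\Xhat\coloneqq\Xklm{\kappa}$, the minimizing property of $\Xhat$ gives, for every $X\in\Xcal$, $\precrt_{\Fcal_\kappa}(\Xhat)\le\totkbl{\kappa}(\Xhat)+\epsilon_n\le\totkbl{\kappa}(X)+\epsilon_n\le\norm{\Delta X}_{\Fcal_\kappa^*}+2\epsilon_n$, hence $\precrt_{\Fcal_\kappa}(\Xhat)\le\min_{X\in\Xcal}\norm{\Delta X}_{\Fcal_\kappa^*}+2\epsilon_n$. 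Applying $\varphi_{\Fcal_\kappa}$ — which is nondecreasing (so it commutes with the finite minimum) and concave with $\varphi_{\Fcal_\kappa}(0)=0$ (so it is subadditive) — and then Proposition~\ref{prop:metafqe_error_bound} yields $\abs{\Delta J(Q^{\Xhat})}\le\varphi_{\Fcal_\kappa}(\precrt_{\Fcal_\kappa}(\Xhat))\le\min_{X\in\Xcal}\crt_\kappa(X)+\varphi_{\Fcal_\kappa}(2\epsilon_n)$, and $\varphi_{\Fcal_\kappa}(2\epsilon_n)\le 2C\max_{1\le h\le H}\norm{w_h}_{\Fcal_\kappa}\epsilon_n=\Ocal(C^2\max_h\norm{w_h}_{\Fcal_\kappa}(\ln(H\abs{\Xcal}/\delta)/n)^{1/4})$ by Proposition~\ref{prop:metafqe_error_bound}; this is precisely the claimed suboptimality bound. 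Since it holds for every fixed $\delta\in(0,1)$ and vanishes as $n\to\infty$, $\subopt(Q^{\Xhat};\Xcal,\crt_\kappa)\overset{P}{\to}0$, i.e.\ $Q^{\Xhat}$ is $\crt_\kappa$-optimal.
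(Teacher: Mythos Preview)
Your proposal is correct and follows essentially the same route as the paper: bound $\abs{\Delta J(Q^X)}$ by $\varphi_{\Fcal_\kappa}(\precrt_{\Fcal_\kappa}(X))$ via Proposition~\ref{prop:metafqe_error_bound}, establish uniform concentration of $\totkbl{\kappa}$ around $\precrt_{\Fcal_\kappa}$ over the finite set $\Xcal$, and chain through the minimizer using subadditivity of the link function. The only difference is that the paper imports the single-$f$ concentration bound $\bigl|\kbl{\kappa}(X;f)-\norm{\Delta X f}_{\Fcal_\kappa^*}^2\bigr|=\Ocal\bigl(C^2\sqrt{\ln(1/\delta)/n}\bigr)$ as a black box from \cite{feng2020accountable}, whereas you rederive it inline via the bias calculation plus McDiarmid on the V-statistic; both yield the same rate and the remaining union-bound and square-root steps coincide.
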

\ifdefined\shortversion\else
\begin{proof}
    See Section~\ref{sec:app_proof_optimality_of_kbs_minimization}.
\end{proof}
\fi

\subsection{Efficient Algorithms for Infinite Time Horizon}
\label{sec:algo_infinite_horizon}

Consider the (discounted) infinite time horizon case, where $H=\infty$ and $\gamma <1$.
In this setting, the \naive procedure of MetaFQE is infeasible due to the linear time complexity with respect to $H$.
A possible workaround is to perform the early stopping
exploiting the contraction inequality $\norminline{Q^\pi-Q^{B_\pi}_h}_\infty \le C\gamma^h$, $h\ge 0$, which implies
it only takes $H^*=C\ln (C/\epsilon)$ iterations to bound the additional error due to the early-stopping below $\epsilon$.
With this strategy, the time complexity of both RM and KLM grows at least linearly with respect to the time constant $C=\frac1{1-\gamma}$.

In this section, we derive computationally less expensive variants of RM and KLM based on the fixed-point characterization of the Q-function, i.e., $f=B_\pi f \Leftrightarrow f=Q^\pi$.
We first introduce a modified version of MetaFQE to solve the fixed-point problem.
Then, we present variants of RM and KLM for the hyperparameter selection of the modified MetaFQE, respectively.

\subsubsection{\textoverline{MetaFQE}: FQE for Fixed Point Problem}
The modified algorithm, called \textoverline{MetaFQE}, is shown in Algorithm~\ref{alg:metafqe_fp}.
The difference from the original MetaFQE is that
the number of the iteration $H^*$ is treated as a hyperparameter
and the outputs is the average of Q-functions over time horizons up to $H^*$
unless a fixed point of $X$ is found during the iteration.
\textoverline{MetaFQE} is guaranteed to produce an approximate fixed point of the Bellman operator
if $\Delta X$ is small and $H^*$ is large.
\begin{proposition}[Fixed-point guarantee of \textoverline{MetaFQE}]
    \label{prop:metafqe_fp_guarantee}
    Suppose $H=\infty$ and $\gamma <1$.
    Fix any Banach subspace $\Fcal\subset L^1(\mu)$
    and $\Qbar^X_{H^*}$ be the output of \textoverline{MetaFQE}.
    Then, there exists a constant $M(\Fcal)<\infty$ only depending on $\Fcal$
    such that
    \begin{align}
        \norm{\Qbar^X_{H^*}- B_\pi \Qbar^X_{H^*}}_{\Fcal^*}
        &\le 
        \norm{\Delta X}_{\Fcal^*} + \frac{C}{H^*}M(\Fcal)
        \label{eq:metafqe_fp_error_bound}
    \end{align}
    for all $X\in \Omega$.
\end{proposition}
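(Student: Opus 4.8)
The plan is to unfold the output of \textoverline{MetaFQE} into the underlying MetaFQE iterates $Q^X_h = X^h 0$, $h\ge 0$, and then apply a telescoping identity to the Bellman residual of their average. On the branch where no fixed point is detected, the output is (a normalization of) $\sum_{h=0}^{H^*-1} Q^X_h$; set $\bar Q\coloneqq\Qbar^X_{H^*}=\frac1{H^*}\sum_{h=0}^{H^*-1} Q^X_h$. Every iterate lies in $[0,C]^{\Scal\times\Acal}$ — by induction, since $Q^X_0=0$ and $X\in\Omega$ preserves $[0,C]^{\Scal\times\Acal}$ — hence so does $\bar Q$, and $B_\pi\bar Q$ is well defined. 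The one structural fact I use is that $B_\pi$ is affine (by its definition it is the fixed function $(s,a)\mapsto\EE[R(s,a)]$ plus a linear operator in $f$), so it commutes with averages: $B_\pi\bar Q=\frac1{H^*}\sum_{h=0}^{H^*-1}B_\pi Q^X_h$. Rewriting the recursion as $B_\pi Q^X_h = Q^X_{h+1}-\Delta X Q^X_h$ and telescoping $\sum_{h=0}^{H^*-1}(Q^X_h-Q^X_{h+1})=Q^X_0-Q^X_{H^*}=-Q^X_{H^*}$ gives the exact identity
\[
    \bar Q - B_\pi\bar Q = \frac1{H^*}\Bigl(-Q^X_{H^*}+\sum_{h=0}^{H^*-1}\Delta X Q^X_h\Bigr).
\]

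Next I take $\norm{\cdot}_{\Fcal^*}$ of both sides. This functional is a seminorm on functions (it is a supremum of the linear maps $g\mapsto\EE_\mu[fg]$ over the balanced unit ball $\{\norm{f}_\Fcal\le1\}$, hence sublinear, positively homogeneous, and invariant under $g\mapsto-g$), so the triangle inequality yields $\norm{\bar Q-B_\pi\bar Q}_{\Fcal^*}\le\frac1{H^*}\norm{Q^X_{H^*}}_{\Fcal^*}+\frac1{H^*}\sum_{h=0}^{H^*-1}\norm{\Delta X Q^X_h}_{\Fcal^*}$. By the definition of the operator dual norm, $\norm{\Delta X Q^X_h}_{\Fcal^*}\le\norm{\Delta X}_{\Fcal^*}$ for every $h$ (each $Q^X_h$ is $[0,C]$-valued), so the second term is at most $\norm{\Delta X}_{\Fcal^*}$. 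For the first term, $0\le Q^X_{H^*}\le C$ pointwise gives $\norm{Q^X_{H^*}}_{\Fcal^*}=\sup_{\norm{f}_\Fcal\le1}\EE_\mu[fQ^X_{H^*}]\le C\sup_{\norm{f}_\Fcal\le1}\norm{f}_{L^1(\mu)}=C\,M(\Fcal)$, where $M(\Fcal)\coloneqq\sup_{\norm{f}_\Fcal\le1}\norm{f}_{L^1(\mu)}$ is the norm of the inclusion $\Fcal\hookrightarrow L^1(\mu)$ and depends only on $\Fcal$ (and on the fixed $\mu$). Combining the two estimates gives exactly \eqref{eq:metafqe_fp_error_bound}. (If the output instead averages a block of $H^*$ consecutive iterates not beginning at index $0$, the telescoped boundary term is $Q^X_{\min}-Q^X_{\max}$ with both endpoints $[0,C]$-valued, contributing $\le2C\,M(\Fcal)/H^*$; one then absorbs the factor $2$ into $M(\Fcal)$.)

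Finally, the early-termination branch is even simpler: if \textoverline{MetaFQE} detects a fixed point $Q^X_{h_0}=XQ^X_{h_0}$ and returns it, then $B_\pi Q^X_{h_0}=XQ^X_{h_0}-\Delta X Q^X_{h_0}=Q^X_{h_0}-\Delta X Q^X_{h_0}$, so the Bellman residual of the output equals $\Delta X Q^X_{h_0}$, whose $\Fcal^*$-dual norm is $\le\norm{\Delta X}_{\Fcal^*}$ and hence $\le$ the right-hand side of \eqref{eq:metafqe_fp_error_bound}. The only point that is not entirely routine is the finiteness $M(\Fcal)<\infty$, \ie boundedness of the embedding $\Fcal\hookrightarrow L^1(\mu)$; this is a mild standing requirement on admissible $\Fcal$ and holds with $M(\Fcal)\le1$ in both instantiations used here — for $\Fcal=L^2(\mu)$ by Cauchy--Schwarz (since $\mu$ is a probability measure) and for $\Fcal=\Fcal_\kappa$ a normalized-kernel RKHS via $\norm{f}_{L^1(\mu)}\le\norm{f}_\infty\le\norm{f}_{\Fcal_\kappa}$. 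Everything else is the affine-commutation/telescoping identity and the triangle inequality.
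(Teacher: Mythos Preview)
Your proof is correct and follows essentially the same route as the paper: split on whether a fixed point is detected, use that $B_\pi$ is affine to commute it with the average, telescope the resulting sum, and bound the boundary term via $\norm{g}_{\Fcal^*}\le M(\Fcal)\norm{g}_\infty$ with $M(\Fcal)\coloneqq\sup_{\norm{f}_\Fcal\le 1}\norm{f}_1$. The only discrepancy is the indexing of the average: the algorithm as written outputs $\Qbar^X_{H^*}=\frac{1}{H^*}\sum_{h=1}^{H^*}Q^X_h$, not $\frac{1}{H^*}\sum_{h=0}^{H^*-1}Q^X_h$, so the telescoped boundary term is $Q^X_1-Q^X_{H^*+1}$ rather than $-Q^X_{H^*}$ --- but your parenthetical remark already covers this, and the bound $\norm{Q^X_1-Q^X_{H^*+1}}_\infty\le C$ still yields \eqref{eq:metafqe_fp_error_bound} without the extra factor of~$2$.
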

\begin{proof}
    See Section~\ref{sec:app_proof_metafqe_fp_guarantee}
\end{proof}
Note that, if $\Fcal$ is dense in $L^1(\mu)$, the LHS of \eqref{eq:metafqe_fp_error_bound} is zero if and only if $\Qbar^X_{H^*}=B_\pi \Qbar^X_{H^*}$ almost everywhere,
thereby measuring the error of the fixed-point problem.

\begin{algorithm2e}[t]
    {\small
    \caption{
        \textoverline{MetaFQE}
    }
    \label{alg:metafqe_fp}
    \KwInput {Approximate Bellman operator $X$, iteration number $H^*$}
    \KwOutput {Fixed-point estimate $\Qbar^X_{H^*}$}
    $Q^X_0\gets 0$; $\Qbar^X_{0}\gets 0$\;
    \For {$h=1,2,...,H^*$} {
        $Q^X_h\gets X Q^X_{h-1}$\;
        \lIf{$Q^X_h=Q^X_{h-1}$} {\Return $Q^X_h$}
        $\Qbar^X_{h}\gets (1-\frac1{h}) \Qbar^X_{h-1}+\frac1{h}Q^X_{h}$\;
    }
    \Return $\Qbar^X_{H^*}$\;
}
\end{algorithm2e}

\subsubsection{Regret Minimization for Fixed Point (RM-FP)}

\newcommand{\regretfp}{\mathrm{Regret}^*_{\Dcal,\pi}}
The infinite-horizon variant of RM is given as
the minimization of the fixed-point Bellman regret
$\regretfp(f;\Xcal)\coloneqq \regret(\Id;\Xcal,f)$
over $f\in\cbr{\Qbar^X_{H^*}}_{X\in \Xcal}$,
where $\Id$ is the identity operator.
We refer to this method as Regret Minimization for Fixed Point~(RM-FP),
whose optimality is given as follows.
\newcommand{\Xrmfp}{{\Xhat_{\text{\rm RM-FP}}}}
\begin{proposition}[Optimality of RM-FP]
    \label{prop:optimality_of_rm_fp}
    Suppose $H=\infty$ and $\gamma <1$.
    Let $H^*\ge n^{1/4}$ and $\Xrmfp\coloneqq \argmin_{X\in \Xcal} \regretfp(\Qbar^X_{H^*};\Xcal)$.
    Then, $\Xrmfp$ is $\crt_2$-optimal.
    The suboptimality of $\Xrmfp$ is bounded by
    \begin{align*}
        \subopt(\Qbar^\Xrmfp_{H^*};\Xcal,\crt_2)
        &=
        \Ocal\rbr{C^2\norm{w}_{2}\rbr{\frac{\ln (\abs{\Xcal}/\delta)}{n}}^{1/4}}.
    \end{align*}
    with probability $1-\delta$, where $\delta\in(0,1)$.
\end{proposition}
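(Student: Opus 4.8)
The plan is to adapt the argument behind Proposition~\ref{prop:optimality_of_rm} to the fixed-point setting. Write $\fhat\coloneqq\Qbar^{\Xrmfp}_{H^{*}}$ and fix $X^{*}\in\argmin_{X\in\Xcal}\norm{\Delta X}_{2}$, so that $\min_{X\in\Xcal}\crt_{2}(X)=3\varphi_{L^{2}(\mu)}(\norm{\Delta X^{*}}_{2})$ because $\varphi_{L^{2}(\mu)}$ is nondecreasing. The proof proceeds in three parts: (a) bound $\abs{\Delta J(\fhat)}$ by a link function of the $L^{2}(\mu)$ Bellman residual $\norm{\fhat-B_{\pi}\fhat}_{2}$; (b) relate that residual to the fixed-point Bellman regret via Proposition~\ref{prop:l2_residual_identity} and a concentration bound; and (c) use the minimality of $\Xrmfp$ together with Proposition~\ref{prop:metafqe_fp_guarantee} to bound the residual by that of $X^{*}$.

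For part (a): when $H=\infty$ and $\gamma<1$, $Q^{\pi}$ is the unique fixed point of $B_{\pi}$, and for any bounded $f$ one has $f-Q^{\pi}=\sum_{k\ge 0}(\gamma P_{\pi})^{k}(f-B_{\pi}f)$, where $P_{\pi}$ is the state-action transition operator induced by $\pi$. Taking the expectation under $P_{1}$ and using $(1-\gamma)\sum_{h\ge 1}\gamma^{h-1}P_{h}=\nu$ gives the identity $\Delta J(f)=C\,\EE_{\mu}[w\,(f-B_{\pi}f)]$, hence (the infinite-horizon analogue of Proposition~\ref{prop:metafqe_error_bound}) $\abs{\Delta J(\fhat)}\le\varphi_{L^{2}(\mu)}(\norm{\fhat-B_{\pi}\fhat}_{2})$ with a link function satisfying $\varphi_{L^{2}(\mu)}(y)\le C\norm{w}_{2}\,y$. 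It therefore suffices to prove that, with probability $1-\delta$, $\norm{\fhat-B_{\pi}\fhat}_{2}\le\sqrt{3}\,\norm{\Delta X^{*}}_{2}+\Ocal(C(\ln(\abs{\Xcal}/\delta)/n)^{1/4})$; combining this with subadditivity of $\varphi_{L^{2}(\mu)}$, the concavity bound $\varphi_{L^{2}(\mu)}(\sqrt{3}\,y)\le\sqrt{3}\,\varphi_{L^{2}(\mu)}(y)$, $\sqrt{3}\le 3$, and $\varphi_{L^{2}(\mu)}(y)\le C\norm{w}_{2}y$ then yields $\abs{\Delta J(\fhat)}\le\min_{X\in\Xcal}\crt_{2}(X)+\Ocal(C^{2}\norm{w}_{2}(\ln(\abs{\Xcal}/\delta)/n)^{1/4})$, which is the stated suboptimality bound; letting $\delta$ decrease with $n$ makes the remainder $o_{P}(1)$, i.e., $\fhat$ is $\crt_{2}$-optimal.

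For parts (b) and (c): applying Proposition~\ref{prop:l2_residual_identity} to the identity operator and to each $A\in\Xcal$ gives $\EE[\sbl(\Id;f)]=\norm{f-B_{\pi}f}_{2}^{2}+\EE[\sbl(B_{\pi};f)]$ and $\EE[\sbl(A;f)]=\norm{\Delta A f}_{2}^{2}+\EE[\sbl(B_{\pi};f)]$. Each $\Qbar^{X}_{H^{*}}$ is a deterministic---hence $\Dcal$-independent---function of $X\in\Xcal$, and every summand of $\sbl(\cdot;\cdot)$ lies in $[0,(1+C)^{2}]$; so Hoeffding's inequality and a union bound over the $\Ocal(\abs{\Xcal}^{2})$ relevant (operator, function) pairs give, with probability $1-\delta$, $\sup\absinline{\sbl(\cdot)-\EE[\sbl(\cdot)]}\le\epsilon_{n}$ with $\epsilon_{n}=\Ocal(C^{2}\sqrt{\ln(\abs{\Xcal}/\delta)/n})$. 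On that event the $\EE[\sbl(B_{\pi};\cdot)]$ terms cancel, so $\regretfp(f;\Xcal)=\norm{f-B_{\pi}f}_{2}^{2}-\min_{A\in\Xcal}\norm{\Delta A f}_{2}^{2}\pm 2\epsilon_{n}$ for every $f\in\{\Qbar^{X}_{H^{*}}\}_{X\in\Xcal}$, with $\min_{A\in\Xcal}\norm{\Delta A f}_{2}^{2}\le\norm{\Delta X^{*}}_{2}^{2}$. Minimality of $\Xrmfp$ then gives $\regretfp(\fhat;\Xcal)\le\regretfp(\Qbar^{X^{*}}_{H^{*}};\Xcal)\le\norm{\Qbar^{X^{*}}_{H^{*}}-B_{\pi}\Qbar^{X^{*}}_{H^{*}}}_{2}^{2}+2\epsilon_{n}$ (dropping the nonpositive $-\min_{A}$ term), and Proposition~\ref{prop:metafqe_fp_guarantee} with $\Fcal=L^{2}(\mu)$ (self-dual) bounds $\norm{\Qbar^{X^{*}}_{H^{*}}-B_{\pi}\Qbar^{X^{*}}_{H^{*}}}_{2}\le\norm{\Delta X^{*}}_{2}+\tfrac{C}{H^{*}}M(L^{2}(\mu))$. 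Substituting these into the regret identity at $f=\fhat$ yields $\norm{\fhat-B_{\pi}\fhat}_{2}^{2}\le 3\norm{\Delta X^{*}}_{2}^{2}+\tfrac{2C^{2}M(L^{2}(\mu))^{2}}{(H^{*})^{2}}+4\epsilon_{n}$; a square root, $\sqrt{a+b+c}\le\sqrt{a}+\sqrt{b}+\sqrt{c}$, the hypothesis $H^{*}\ge n^{1/4}$, and $\sqrt{\epsilon_{n}}=\Ocal(C(\ln(\abs{\Xcal}/\delta)/n)^{1/4})$ give exactly the bound required in part (a).

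I expect part (c) to be the main obstacle: because $\Xrmfp$ minimizes the \emph{estimated} regret rather than the true Bellman error, the argument must route through the one-sided inequality $\regretfp(\fhat;\Xcal)\le\regretfp(\Qbar^{X^{*}}_{H^{*}};\Xcal)$ and simultaneously absorb two independent error sources---the $\Ocal(\epsilon_{n})$ sampling noise in the regret estimate and the $\Ocal(C/H^{*})$ fixed-point-truncation error of Proposition~\ref{prop:metafqe_fp_guarantee}---while verifying that the hypothesis $H^{*}\ge n^{1/4}$ is precisely what matches the truncation error to the $n^{-1/4}$ statistical rate. A secondary technical point is establishing the infinite-horizon link-function bound $\varphi_{L^{2}(\mu)}(y)\le C\norm{w}_{2}y$ used in part (a) and checking that $\Dcal$-independence of $\{\Qbar^{X}_{H^{*}}\}_{X\in\Xcal}$ turns the concentration step into a finite union bound rather than a uniform-convergence argument.
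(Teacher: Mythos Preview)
Your proposal is correct and follows essentially the same route as the paper: an infinite-horizon master error bound (the paper's Lemma~\ref{lem:metafqe_fp_error_bound}), the regret--$L^2$ residual relation via Proposition~\ref{prop:l2_residual_identity} plus Hoeffding (the paper packages this as Lemma~\ref{lem:approximate_l2_dual_norm_with_regret} applied with $X=\Id$), Proposition~\ref{prop:metafqe_fp_guarantee} for the comparator $X^*$, and the link-function subadditivity/concavity to finish. The only cosmetic difference is that you manipulate squared norms throughout and take the square root at the end (obtaining a leading constant $\sqrt{3}$ that you then relax to $3$), whereas the paper works in the square-root form from the start and arrives at the constant $3$ directly.
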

\ifdefined\shortversion\else
\begin{proof}
    See Section~\ref{sec:app_proof_optimality_of_rm_fp}.
\end{proof}
\fi

\subsubsection{Kernel Loss Minimization for Fixed Point (KLM-FP)}

\newcommand{\kblfp}[1]{\Lcal_{#1,\Dcal,\pi}^*}
The infinite-horizon variant of KLM is
given as the minimization of the fixed-point kernel Bellman loss
$\kblfp{\kappa}(f)\coloneqq \kbl{\kappa}(\Id;f)$
over $f\in\cbr{\Qbar^X_{H^*}}_{X\in \Xcal}$.
The loss is originally proposed by \cite{feng2019kernel}
as the kernel Bellman V-statistic.
We refer to this method as Kernel Loss Minimization for Fixed Point~(KLM-FP),
whose optimality is given as follows.

\newcommand{\Xklmfp}[1]{\Xhat_{\text{\rm KLM-FP}( #1 )}}
\begin{proposition}[Optimality of KLM-FP]
    \label{prop:optimality_of_klm_fp}
    Suppose $H=\infty$ and $\gamma<1$.
    Let $H^*\ge n^{1/4}$ and
    $\Xklmfp{\kappa}\coloneqq \argmin_{X\in\Xcal} \kblfp{\kappa}(\Qbar^X_{H^*})$.
    Then,
    $\Xklmfp{\kappa}$ is $\crt_{\kappa}$-optimal.
    The suboptimality is bounded by
    {\small
    \begin{align*}
        \subopt(\Qbar_{H^*}^{\Xklmfp{\kappa}};\Xcal,\crt_\kappa)
        &=
        \Ocal\rbr{C^2\norm{w}_{\Fcal_\kappa}
        \rbr{\frac{\ln (\abs{\Xcal}/\delta)}{n}}^{1/4}}
    \end{align*}
    }
    with probability $1-\delta$,
    where $\delta\in(0,1)$.
\end{proposition}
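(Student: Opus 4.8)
The plan is to treat KLM-FP as the merger of the kernel-loss analysis behind Proposition~\ref{prop:optimality_of_klm} with the fixed-point reduction behind Proposition~\ref{prop:optimality_of_rm_fp}. The starting point is the fixed-point analogue of the master bound: for every $[0,C]$-valued $f$,
\begin{align*}
    \abs{\Delta J(f)} \;\le\; \varphi_{\Fcal_\kappa}\rbr{\norm{f-B_\pi f}_{\Fcal_\kappa^*}},
\end{align*}
where $\varphi_{\Fcal_\kappa}$ is a link function obeying $\varphi_{\Fcal_\kappa}(y)\le C\norm{w}_{\Fcal_\kappa}\,y$. This is immediate from the simulation-lemma identity $\Delta J(f)=C\,\EE_{(s,a)\sim\nu}[(f-B_\pi f)(s,a)]=C\,\EE_\mu[w\cdot(f-B_\pi f)]$ (valid under Assumption~\ref{asm:sufficient_exploration}), the duality bound $\absinline{\EE_\mu[w\,g]}\le\norm{w}_{\Fcal_\kappa}\norm{g}_{\Fcal_\kappa^*}$, and the trivial estimate $\absinline{\Delta J(f)}\le C$ from range-boundedness, so one may take $\varphi_{\Fcal_\kappa}(y)=\min\cbrinline{C,\,C\norm{w}_{\Fcal_\kappa}\,y}$. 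Since the output $\Qbar^X_{H^*}$ of \textoverline{MetaFQE} is a convex combination of the $[0,C]$-valued iterates $Q^X_h$, hence itself $[0,C]$-valued, combining this bound with Proposition~\ref{prop:metafqe_fp_guarantee} yields $\abs{\Delta J(\Qbar^X_{H^*})}\le\varphi_{\Fcal_\kappa}\rbr{\norm{\Delta X}_{\Fcal_\kappa^*}+\tfrac{C}{H^*}M(\Fcal_\kappa)}$ for every $X\in\Xcal$.

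Next I would make the selection criterion observable. By Proposition~\ref{prop:kernel_loss_identity}, $\EE\sbrinline{\kblfp{\kappa}(f)}$ equals $\norm{f-B_\pi f}_{\Fcal_\kappa^*}^2$ up to the diagonal bias of the $V$-statistic, which is $\Ocal(C^2/n)$ since the kernel is bounded by $1$ (Assumption~\ref{asm:normalized_kernel}) and the Bellman residuals $r+\gamma f(u')-f(u)$ are uniformly $\Ocal(C)$. A Hoeffding/bounded-differences tail bound for this $V$-statistic (whose summands are of order $C^2$), followed by a union bound over the $\absinline{\Xcal}$ functions $\cbrinline{\Qbar^X_{H^*}}_{X\in\Xcal}$ --- each deterministic given $X$ and hence independent of $\Dcal$ --- gives, with probability at least $1-\delta$,
\begin{align*}
    \sup_{X\in\Xcal}\abs{\kblfp{\kappa}(\Qbar^X_{H^*})-\norm{\Qbar^X_{H^*}-B_\pi\Qbar^X_{H^*}}_{\Fcal_\kappa^*}^2}\;\le\;\epsilon_n,
    \qquad \epsilon_n=\Ocal\rbr{C^2\sqrt{\frac{\ln(\absinline{\Xcal}/\delta)}{n}}}.
\end{align*}

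On this event I run the usual $\argmin$ comparison. With $\Xhat\coloneqq\Xklmfp{\kappa}$ and $X^\dagger\in\argmin_{X\in\Xcal}\crt_\kappa(X)$, optimality of $\Xhat$ for $X\mapsto\kblfp{\kappa}(\Qbar^X_{H^*})$ and the previous display give $\norm{\Qbar^{\Xhat}_{H^*}-B_\pi\Qbar^{\Xhat}_{H^*}}_{\Fcal_\kappa^*}^2\le\norm{\Qbar^{X^\dagger}_{H^*}-B_\pi\Qbar^{X^\dagger}_{H^*}}_{\Fcal_\kappa^*}^2+2\epsilon_n$; taking square roots, bounding the right-hand side with Proposition~\ref{prop:metafqe_fp_guarantee}, and using $\sqrt{a^2+b}\le a+\sqrt{b}$ gives
\begin{align*}
    \norm{\Qbar^{\Xhat}_{H^*}-B_\pi\Qbar^{\Xhat}_{H^*}}_{\Fcal_\kappa^*}\;\le\;\norm{\Delta X^\dagger}_{\Fcal_\kappa^*}+\tfrac{C}{H^*}M(\Fcal_\kappa)+\sqrt{2\epsilon_n}.
\end{align*}
Plugging this into the fixed-point error bound and using that a link function is subadditive (being concave with $\varphi(0)=0$) and dominated by $y\mapsto C\norm{w}_{\Fcal_\kappa}\,y$,
\begin{align*}
    \abs{\Delta J(\Qbar^{\Xhat}_{H^*})}
    &\le\varphi_{\Fcal_\kappa}\rbr{\norm{\Delta X^\dagger}_{\Fcal_\kappa^*}}+C\norm{w}_{\Fcal_\kappa}\rbr{\tfrac{C}{H^*}M(\Fcal_\kappa)+\sqrt{2\epsilon_n}}\\
    &=\crt_\kappa(X^\dagger)+C\norm{w}_{\Fcal_\kappa}\rbr{\tfrac{C}{H^*}M(\Fcal_\kappa)+\sqrt{2\epsilon_n}}.
\end{align*}
Because $H^*\ge n^{1/4}$ makes $C/H^*\le Cn^{-1/4}$ and $\sqrt{\epsilon_n}=\Ocal(C(\ln(\absinline{\Xcal}/\delta)/n)^{1/4})$, the excess of $\abs{\Delta J(\Qbar^{\Xhat}_{H^*})}$ over $\min_{X\in\Xcal}\crt_\kappa(X)=\crt_\kappa(X^\dagger)$ --- namely $\subopt(\Qbar^{\Xhat}_{H^*};\Xcal,\crt_\kappa)$ --- is $\Ocal(C^2\norm{w}_{\Fcal_\kappa}(\ln(\absinline{\Xcal}/\delta)/n)^{1/4})$, the claimed bound once $M(\Fcal_\kappa)$ is absorbed into the constant; letting $n\to\infty$ with $\delta$ fixed and then $\delta\to 0$ gives $\subopt\to 0$ in probability, that is, $\crt_\kappa$-optimality.

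The step I expect to be the main obstacle is the concentration bound: establishing an order-$C^2/\sqrt n$ uniform deviation for the kernel Bellman $V$-statistic (via Hoeffding's inequality for $U$-statistics or a bounded-differences argument), cleanly isolating the $\Ocal(C^2/n)$ diagonal bias, and then carrying this error through a square root and through the concave link function without degrading the $n^{-1/4}$ rate; some extra care is needed because the residuals scale with $C$, so every estimate must track the correct power of $C$. The remaining ingredients --- the fixed-point form of Proposition~\ref{prop:metafqe_error_bound}, Proposition~\ref{prop:metafqe_fp_guarantee}, Proposition~\ref{prop:kernel_loss_identity}, and the $\argmin$ bookkeeping --- are routine.
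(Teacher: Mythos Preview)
Your proposal is correct and follows essentially the same route as the paper: apply the fixed-point master bound (Lemma~\ref{lem:metafqe_fp_error_bound}), use concentration of the kernel Bellman $V$-statistic together with a union bound over $\Xcal$ to control $\norm{\Qbar^{\Xhat}_{H^*}-B_\pi\Qbar^{\Xhat}_{H^*}}_{\Fcal_\kappa^*}-\min_{X}\norm{\Qbar^{X}_{H^*}-B_\pi\Qbar^{X}_{H^*}}_{\Fcal_\kappa^*}$ (the paper packages this as Lemma~\ref{lem:approximate_rkhs_dual_norm_with_kbl}, citing Proposition~3.1 of \cite{feng2020accountable} for the concentration step you flag as the main obstacle), bound the minimum via Proposition~\ref{prop:metafqe_fp_guarantee}, and finish with subadditivity and the linear upper bound on $\varphi_{\Fcal_\kappa}$. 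One small point: rather than ``absorbing $M(\Fcal_\kappa)$ into the constant'', the paper uses Assumption~\ref{asm:normalized_kernel} to get $M(\Fcal_\kappa)\le 1$ explicitly, which matters because $M(\Fcal_\kappa)$ is $\kappa$-dependent and not a universal constant.
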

\ifdefined\shortversion\else
\begin{proof}
    See Section~\ref{sec:app_proof_optimality_of_klm_fp}.
\end{proof}
\fi

\subsection{Comparison of Algorithms}
\label{sec:method_comparison}

We have derived four hyperparameter selection algorithms for FQE,
namely, RM, KLM, RM-FP and KLM-FP.
In this section, we discuss their properties in a comparative manner
with different perspectives.
See Table~\ref{tab:comparison} for the summary of the theoretical guarantees given by 
Proposition~\ref{prop:optimality_of_rm}, \ref{prop:optimality_of_klm}, \ref{prop:optimality_of_rm_fp}
and \ref{prop:optimality_of_klm_fp}.

\paragraph{Distribution-mismatch tolerance.}
The tolerance to the mismatch of $P_h~(1\le h\le H)$ from $\mu$ is captured with the link function $\varphi_\Fcal$ 
since it is the only quantity in \eqref{eq:metafqe_error_bound} that depends on $P_h$.
The RM family has better tolerance than the KLM family since $\varphi_{L^2(\mu)}\le \varphi_{\Fcal_\kappa}$.
This can be also seen from the off-policy factor $D$ in Table~\ref{tab:comparison}.
For example, consider $\kappa$ as a Gaussian kernel with $\Scal\times \Acal$ being a subset of a Euclidean space.
Then, if $w$ is bounded but discontinuous,
we have $\norminline{w}_2<\infty$ but $\norminline{w}_{\Fcal_\kappa}=\infty$.
Note, however, that KLM and KLM-FP are still consistent with suitable $\kappa$
since they are instances of the AHS framework.

\paragraph{Misspecification tolerance.}
The KLM family has better dependency on $\Delta X$ than the RM family
as $\norm{\cdot}_{\Fcal_\kappa^*}\le \norm{\cdot}_2$.
This implies they are more robust when the hyperparameter candidates $\Xcal$ are poorly specified.
However, the advantage maybe less significant than the distribution-mismatch tolerance
since its effect is bounded even in the worst case, $\norm{\Delta X}_2\le C<\infty$ for all $X\in \Omega$.

\paragraph{Time complexity.}
The time complexities of RM, RM-FP, KLM and KLM-FP are summarized in Table~\ref{tab:time_complexity}
in the appendix.
In either of finite or infinite horizon case,
the RM family dominates the KLM family in terms of the time complexity if
$n\gg K$ and the KLM family dominates if $K\gg n$,
where $K\coloneqq \abs{\Xcal}$
Besides, an advantage of the infinite-horizon methods is
that their time complexities are independent of time constant $C$,
which is not the case with the finite-horizon methods.

\paragraph{Hyperparameters.}
The KLM family requires a hyperparameter, i.e., the kernel $\kappa$.
The choice of the kernel is crucial in the sense that
it determines the tightness of the overall error bound~\eqref{eq:metafqe_error_bound} via $\varphi_{\Fcal_\kappa}$.
However, $\varphi_{\Fcal_\kappa}$ depends on 
unknown quantities such as the data marginal $\mu$ and episode marginal $\cbr{P_h}_{h=1}^H$.
On the other hand, the RM family has no hyperparameter.

\section{Experimental Results}
\label{sec:experiment}
We report experimental results on the comparison of RM and KLM.
We employ InvManagement-v1 from OR-Gym~\cite{hubbs2020or} as the environment.
The offline data $\Dcal$ of $n=480$ is sampled with a completely random policy
following the uniform distribution over the action space.
The evaluation policies $\pi$ are then prepared as $(\epsilon_{\rm eval}:1-\epsilon_{\rm eval})$-mixtures
of the random and expert policies with different $\epsilon_{\rm eval}$.
In Figure~\ref{fig:mean_absolute_error},
it is shown RM outperforms KLMs with most of the kernel configurations
especially with longer horizon $H$ and smaller $\epsilon_{\rm eval}$,
both of which indicate the substantial distribution mismatch is expected.
This matches the theoretical prediction on the distribution-mismatch tolerance and
emphasizes the importance of appropriate kernel selection for KLM.
For further details of the experiment, see Section~\ref{sec:app_experiment}.

\begin{figure}[t]
    \centering
    \includegraphics[width=90mm]{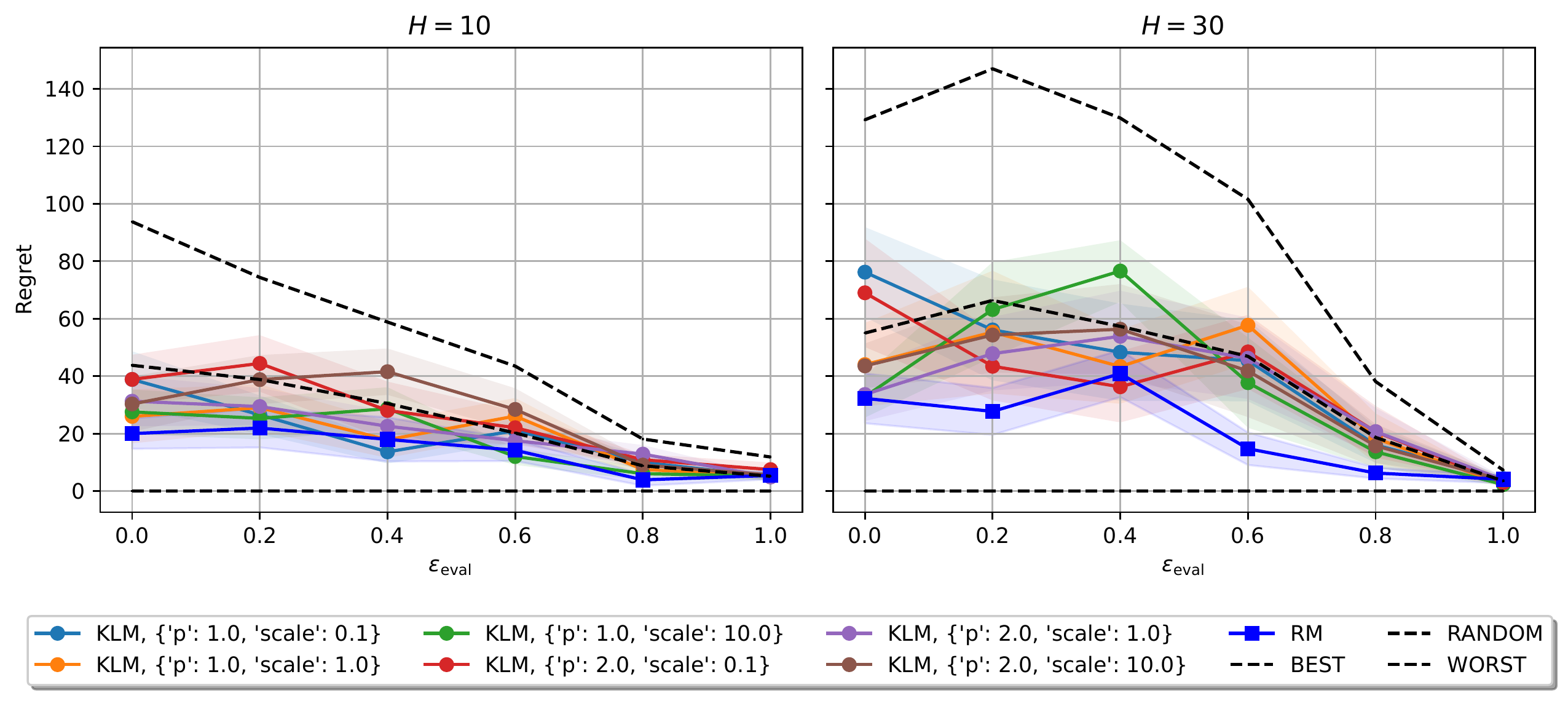}
    \caption{
        The horizontal axis indicates the exploration parameter $\epsilon_{\rm eval}$ of the evaluation policy
        and the vertical axis indicates the excess mean absolute error relative to the best possible choice.
    }
    \label{fig:mean_absolute_error}
\end{figure}

\section{Related Work}
\label{sec:related}

\cite{zhang2021towards} proposed BVFT-PE to solve the hyperparameter selection problem for OPE and 
gave an upper bound on $\norminline{\Qhat-Q^\pi}_2$, which can be translated to the OPE error bound via Proposition~\ref{prop:metafqe_error_bound}.
In comparison, the scope of BVFT-PE is broader than any of the four methods as
it is applicable to any Q-function-estimating OPE algorithms.
Another major difference is in the theoretical assumption; BVFT-PE relies on assumptions much stronger than Assumption~\ref{asm:sufficient_exploration}.
Also, BVFT-PE is not completely hyperparameter-free in terms of the error bound~(Theorem 4, \cite{zhang2021towards}), relying on the oracle choice of $\epsilon_{\rm dct}$.
Another closely related line of research is the loss-minimization formulation of OPE~\citep{baird1995residual,feng2019kernel,dai2018sbeed},
in which OPE is formulated as ordinary optimization problems with objective functions.
The objective functions are readily usable as hyperparameter selection criteria, but there have been no study applying them to minimize the OPE error with theoretical guarantee.
Moreover, these objective functions are either inconsistent~(unable to select the true operator $B_\pi$, e.g., \cite{baird1995residual}) or hyperparameter-dependent by themselves~(e.g., \cite{feng2019kernel,dai2018sbeed}).

At the bottom of Table~\ref{tab:comparison}, we translate these results in a comparative form.\footnote{
    The responsibility of the translation,
    in particular the derivation of the off-policy factor $D$, is ours.
    $C_{\rm BVFT}$ denotes the constant $C$ given by Assumption~1 of \cite{xie2021batch}.
}
Note that RM and RM-FP are the most distribution-mismatch tolerant and the only completely hyperparameter-free methods there.
In terms of the misspecification tolerance, KLM and KLM-FP~(which is equivalent to \textoverline{MetaFQE} + \cite{feng2019kernel}) are the best.

\section{Conclusion}
\label{sec:conclusion}
We have presented hyperparameter selection methods of FQE
and discussed their properties.
In particular, RM and RM-FP are the first hyperparameter selection algorithms
for FQE with \emph{hyperparameter-free error guarantee}.
We have also confirmed in a toy example
that empirical results match the theoretical prediction based on the error guarantee.
The major limitation of our analysis is that it is only applicable to FQE-like algorithms.

Possible future directions include extensions of the KLM methods:
Theoretical justification on a specific kernel choice
and more time-efficient algorithms reducing the factor of $n^2$.


\bibliographystyle{apalike}
\bibliography{reference.bib}

\begin{thebibliography}{}

\bibitem[Baird, 1995]{baird1995residual}
Baird, L. (1995).
\newblock Residual algorithms: Reinforcement learning with function
  approximation.
\newblock In {\em Machine Learning Proceedings 1995}, pages 30--37. Elsevier.

\bibitem[Dai et~al., 2018]{dai2018sbeed}
Dai, B., Shaw, A., Li, L., Xiao, L., He, N., Liu, Z., Chen, J., and Song, L.
  (2018).
\newblock Sbeed: Convergent reinforcement learning with nonlinear function
  approximation.
\newblock In {\em International Conference on Machine Learning}, pages
  1125--1134. PMLR.

\bibitem[Duan et~al., 2020]{duan2020minimax}
Duan, Y., Jia, Z., and Wang, M. (2020).
\newblock Minimax-optimal off-policy evaluation with linear function
  approximation.
\newblock In {\em International Conference on Machine Learning}, pages
  2701--2709. PMLR.

\bibitem[Feng et~al., 2019]{feng2019kernel}
Feng, Y., Li, L., and Liu, Q. (2019).
\newblock A kernel loss for solving the bellman equation.
\newblock {\em arXiv preprint arXiv:1905.10506}.

\bibitem[Feng et~al., 2020]{feng2020accountable}
Feng, Y., Ren, T., Tang, Z., and Liu, Q. (2020).
\newblock Accountable off-policy evaluation with kernel bellman statistics.
\newblock In {\em International Conference on Machine Learning}, pages
  3102--3111. PMLR.

\bibitem[Haarnoja et~al., 2018]{haarnoja2018soft}
Haarnoja, T., Zhou, A., Abbeel, P., and Levine, S. (2018).
\newblock Soft actor-critic: Off-policy maximum entropy deep reinforcement
  learning with a stochastic actor.
\newblock In {\em International conference on machine learning}, pages
  1861--1870. PMLR.

\bibitem[Hubbs et~al., 2020]{hubbs2020or}
Hubbs, C.~D., Perez, H.~D., Sarwar, O., Sahinidis, N.~V., Grossmann, I.~E., and
  Wassick, J.~M. (2020).
\newblock Or-gym: A reinforcement learning library for operations research
  problems.
\newblock {\em arXiv preprint arXiv:2008.06319}.

\bibitem[Le et~al., 2019]{le2019batch}
Le, H., Voloshin, C., and Yue, Y. (2019).
\newblock Batch policy learning under constraints.
\newblock In {\em International Conference on Machine Learning}, pages
  3703--3712. PMLR.

\bibitem[Levine et~al., 2020]{levine2020offline}
Levine, S., Kumar, A., Tucker, G., and Fu, J. (2020).
\newblock Offline reinforcement learning: Tutorial, review, and perspectives on
  open problems.
\newblock {\em arXiv preprint arXiv:2005.01643}.

\bibitem[Luo et~al., 2019]{luo2018algorithmic}
Luo, Y., Xu, H., Li, Y., Tian, Y., Darrell, T., and Ma, T. (2019).
\newblock Algorithmic framework for model-based deep reinforcement learning
  with theoretical guarantees.
\newblock In {\em International Conference on Learning Representations}.

\bibitem[Paine et~al., 2020]{paine2020hyperparameter}
Paine, T.~L., Paduraru, C., Michi, A., Gulcehre, C., Zolna, K., Novikov, A.,
  Wang, Z., and de~Freitas, N. (2020).
\newblock Hyperparameter selection for offline reinforcement learning.
\newblock {\em arXiv preprint arXiv:2007.09055}.

\bibitem[Seno and Imai, 2021]{seno2021d3rlpy}
Seno, T. and Imai, M. (2021).
\newblock d3rlpy: An offline deep reinforcement library.
\newblock In {\em NeurIPS 2021 Offline Reinforcement Learning Workshop}.

\bibitem[Xie and Jiang, 2021]{xie2021batch}
Xie, T. and Jiang, N. (2021).
\newblock Batch value-function approximation with only realizability.
\newblock In {\em International Conference on Machine Learning}, pages
  11404--11413. PMLR.

\bibitem[Yu et~al., 2020]{NEURIPS2020_a322852c}
Yu, T., Thomas, G., Yu, L., Ermon, S., Zou, J.~Y., Levine, S., Finn, C., and
  Ma, T. (2020).
\newblock Mopo: Model-based offline policy optimization.
\newblock In Larochelle, H., Ranzato, M., Hadsell, R., Balcan, M.~F., and Lin,
  H., editors, {\em Advances in Neural Information Processing Systems},
  volume~33, pages 14129--14142. Curran Associates, Inc.

\bibitem[Zhang and Jiang, 2021]{zhang2021towards}
Zhang, S. and Jiang, N. (2021).
\newblock Towards hyperparameter-free policy selection for offline
  reinforcement learning.
\newblock {\em Advances in Neural Information Processing Systems}, 34.

\end{thebibliography}

\newpage
\appendix


\begin{table}
    \caption{
        Comparison of time complexity.
    }
    \label{tab:time_complexity}
    \centering
    \begin{tabular}{cccc}
        \toprule
        RM              & KLM           & RM-FP                     & KLM-FP        \\
        \midrule                                                    
        $\Ocal(HK^2n)$   & $\Ocal(HKn^2)$ & $\Ocal(Kn^{5/4}+K^2n)$    & $\Ocal(Kn^2)$ \\
        \bottomrule
    \end{tabular}
\end{table}

\begin{algorithm2e}[t]
    \caption{
        FQE with Regret Minimization~(FQE-RM)
    }
    \label{alg:fqe_rm}
    \KwInput{Data $\Dcal$, policy $\pi$, operators $\Xcal\coloneqq\cbr{X_k}_{k=1}^K$}
    \KwOutput {Action-value estimate $\Qhat$}
    \For {$k=1,...,K$} {
        ${\rm Reg}_k\gets \totregret(X_k;\Xcal)$ \tcp*{\eqref{eq:bellman_regret_sum}}
    }
    $\khat\gets \argmin_{1\le k\le K} {\rm Reg}_k$\;
    \Return $\texttt{MetaFQE}(X_{\khat})$\;
\end{algorithm2e}

\begin{algorithm2e}[t]
    \caption{
        FQE with Kernel Loss Minimization~(FQE-KLM)
    }
    \label{alg:fqe_klm}
    \KwInput {
        Data $\Dcal$, policy $\pi$, operator candidates $\Xcal=\cbrinline{X_k}_{k=1}^K$, kernel $\kappa$
    }
    \KwOutput {
        Action-value estimate $\Qhat$
    }
    \For {$k=1,...,K$} {
        ${\rm KL}_k\gets \totkbl{\kappa}(X_k)$ \tcp*{\eqref{eq:total_kernel_loss}}
    }
    $\khat\gets \argmin_{1\le k\le K} {\rm KL}_k$\;
    \Return $\texttt{MetaFQE}(X_{\khat})$\;
\end{algorithm2e}

\section{Details on Experiments}
\label{sec:app_experiment}
In both settings, we set $\gamma=1$.
The expert policies are trained via online reinforcement learning with
the soft actor-critic algorithm~\citep{haarnoja2018soft}
so that they attain sufficiently large policy values,
where the implementation is given by d3rlpy~\citep{seno2021d3rlpy}.
The actual reward of the experts are reported in Table~\ref{tab:expert_value}.
The candidate set $\Xcal$ consists of the FQE operators of GBDTRegressor from LightGBM
with different number of trees, ${\tt n\_estimators}\in\cbrinline{2^0, 2^1, ..., 2^7}$,
where the regressors are fitted on i.i.d.~sample of $n=480$ and the other parameters are set to default.
KLM is run with the exponential-type kernel
\begin{align}
    \kappa(u,\util)=\exp\cbr{-\abs{u-\util}_p/\sigma},
    \label{eq:exponential_kernel}
\end{align}
where $\abs{\cdot}_p$ is the $\ell^p$-norm of vectors and $\sigma$ is the scale parameter.
We employed six different kernels with the combination of $p\in\cbr{1,2}$ and $\sigma\in\cbr{10^{-1}, 10^0, 10^1}$.
Moreover, before fed into the kernels or the regressors,
the data $(\stil_i,\atil_i)$ are normalized so that each feature dimension has zero mean and unit variance.

The lines in Figure~\ref{fig:mean_absolute_error} indicates
the excess absolute error, $\absinline{\Delta J(\Qhat)}-\min_{X\in \Xcal} \absinline{\Delta J(Q^X)}$,
averaged over 10 independent runs.
The shaded areas show the estimated standard deviation of the average.

\begin{table}
    \centering
    \caption{The true policy values of the experts~(standard deviation).}
    \label{tab:expert_value}
    \begin{tabular}{c|cc}
        \toprule
        $H$ & 10 & 30 \\
        \midrule
        $J(\pi_{\rm expert})$ & 211.187 (1.146) & 426.978 (0.849) \\
        \bottomrule
    \end{tabular}
\end{table}

\section{Proofs}

\subsection{Proofs for Finite Horizon}

\subsubsection{Proof of Proposition~\ref{prop:metafqe_error_bound}}
\label{sec:app_proof_metafqe_error_bound}


First, we show a useful lemma that characterizes the relationship of the OPE error $\Delta J(Q^X)$ and
the operator error $\Delta X$.
It can be seen as a generalization of the telescoping lemma of model-based RL~(Lemma~4.3 of \cite{luo2018algorithmic}, Lemma~4.1 of \cite{NEURIPS2020_a322852c}).
As opposed to these results, Lemma~\ref{lem:metafqe_error_identity}
allows the approximate operator $X$ to be
those
that admit no model-based interpretation.

\begin{lemma}[Error identity]
    \label{lem:metafqe_error_identity}
    For all $X\in \Omega$,
    \begin{align*}
        \Delta J(Q^X)
        &=
        \sum_{h=1}^H\gamma^{h-1} \EE^\pi\sbr{
            (\Delta X Q^X_{H-h})(s_{h},a_{h})
        }.
    \end{align*}
\end{lemma}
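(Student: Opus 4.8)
The plan is to prove the identity by \emph{unrolling} the MetaFQE recursion $Q^X_h = X Q^X_{h-1}$, $Q^X_0=0$ (so $Q^X = Q^X_H$), one Bellman step at a time: at each step I rewrite an application of $X$ as $X = B_\pi + \Delta X$ and, from the $B_\pi$ part only, peel off exactly one unit of discounted reward. This is precisely the telescoping structure, and the feature that makes it work for an arbitrary $X\in\Omega$ with no model-based interpretation is that $Xf$ is never opened up --- only $B_\pi f$ is. The single probabilistic ingredient is the one-step identity
\begin{align*}
    \EE^\pi\sbr{(B_\pi f)(s_h,a_h)} = \EE^\pi[r_h] + \gamma\,\EE^\pi\sbr{f(s_{h+1},a_{h+1})}, \qquad h\ge 1,
\end{align*}
valid for every bounded $f:\Scal\times\Acal\to\RR$; it follows from the definition $(B_\pi f)(s,a)=\EE[R(s,a)]+\gamma\,\EE[f(s',\pi(s'))\mid s'\sim T(s,a)]$ by conditioning on $(s_h,a_h)$ and recalling that along the rollout $r_h\sim R(s_h,a_h)$, $s_{h+1}\sim T(s_h,a_h)$ and $a_{h+1}\sim\pi(s_{h+1})$.

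Concretely, I would prove by induction on $m = 0,1,\dots,H$ the invariant
\begin{align*}
    J(Q^X)
    &= \sum_{h=1}^{m}\gamma^{h-1}\EE^\pi[r_h]
    + \sum_{h=1}^{m}\gamma^{h-1}\EE^\pi\sbr{(\Delta X\,Q^X_{H-h})(s_h,a_h)} \\
    &\quad + \gamma^{m}\,\EE^\pi\sbr{Q^X_{H-m}(s_{m+1},a_{m+1})}.
\end{align*}
The base case $m=0$ is just $J(Q^X) = J(Q^X_H) = \EE^\pi[Q^X_H(s_1,a_1)]$, using linearity of $J$ and $J(f)=\EE^\pi[f(s_1,a_1)]$ (which is $\EE[f(s_1,\pi(s_1))]$ by definition). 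For the step $m\to m+1$, I substitute $Q^X_{H-m} = X Q^X_{H-m-1} = B_\pi Q^X_{H-m-1} + \Delta X\,Q^X_{H-m-1}$ inside the trailing expectation, use linearity of $\EE^\pi$, and apply the one-step identity to the $B_\pi$ term with $f = Q^X_{H-m-1}$ and $h = m+1$; this turns the level-$m$ trailing term into $\gamma^{m}\EE^\pi[r_{m+1}]$, the new error term $\gamma^{m}\EE^\pi[(\Delta X\,Q^X_{H-m-1})(s_{m+1},a_{m+1})]$, and the level-$(m+1)$ trailing term, which is exactly what the invariant demands (note $H-m-1 = H-(m+1)$). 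Setting $m=H$ and using $Q^X_{H-H} = Q^X_0 = 0$ to annihilate the trailing term yields
\begin{align*}
    J(Q^X) = \sum_{h=1}^{H}\gamma^{h-1}\EE^\pi[r_h] + \sum_{h=1}^{H}\gamma^{h-1}\EE^\pi\sbr{(\Delta X\,Q^X_{H-h})(s_h,a_h)},
\end{align*}
and since $\sum_{h=1}^{H}\gamma^{h-1}\EE^\pi[r_h] = \EE^\pi[\sum_{h=1}^{H}\gamma^{h-1}r_h] = J(\pi) = J(Q^\pi)$, subtracting $J(Q^\pi)$ from both sides gives $\Delta J(Q^X) = \sum_{h=1}^{H}\gamma^{h-1}\EE^\pi[(\Delta X\,Q^X_{H-h})(s_h,a_h)]$, as claimed.

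I do not anticipate a genuinely hard step; the content is entirely the telescoping structure. The points that need care are: (i) the one-step identity, where the reward must be attributed to the $B_\pi$ part and not double-counted; (ii) keeping the index shifts consistent so that $X$ always acts on the iterate $Q^X_{H-m-1}$ (matching $X Q^X_{H-m-1}=Q^X_{H-m}$) and the induction terminates exactly at $m=H$ because $Q^X_0=0$; and (iii) integrability of every quantity, which is immediate since any $X\in\Omega$ produces $[0,C]$-valued iterates and $R$ is $[0,1]$-valued, so all expectations are finite and the rearrangements are justified.
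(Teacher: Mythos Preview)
Your proof is correct and follows essentially the same telescoping argument as the paper: both decompose $X = B_\pi + \Delta X$ at each level of the recursion and peel off one discounted reward per step. The only cosmetic difference is that the paper first establishes the pointwise identity $Q^X - Q^\pi = \sum_{h=1}^H (\gamma P_\pi)^{h-1}\Delta X\,Q^X_{H-h}$ using the affine operator $P_\pi$ and then takes expectations, whereas you carry out the same telescoping directly at the level of expectations via the one-step identity and induction on $m$.
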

%
\begin{proof}
    Let $\rbar(s,a)\coloneqq \EE[R(s,a)]$ and 
    let $P_\pi$ be the state-transition operator such that
    $(P_\pi f)(s,a)= \EE[f(s',\pi(s'))|s'\sim T(s,a)]$
    for $f:\Scal\times \Acal\to \RR$.
    Then, we have $B_\pi f=\rbar + \gamma P_\pi f$ for all $f:\Scal\times \Acal\to \RR$
    and therefore
    \begin{align*}
        Q^\pi&=B_\pi^H 0 = \sum_{h=1}^H \rbr{\gamma P_\pi}^{h-1} \rbar
    \end{align*}
    by the linearity of $P_\pi$.
    Since $P_\pi$ is linear, we can telescope the sum to get
    \begin{align*}
        Q^\pi&=Q^X_H-Q^X_0 + \sum_{h=1}^H \rbr{\gamma P_\pi}^{h-1} \cbr{
            \rbar + \gamma P_\pi Q^X_{H-h} - Q^X_{H-h+1}
        }.
    \end{align*}
    Note that $\rbar + \gamma P_\pi Q^X_{H-h} - Q^X_{H-h+1}=-\Delta X Q^X_{H-h}$.
    Since we have $Q^X_0=0$ and $Q^X_H=Q^X$ by definition, we also get
    \begin{align*}
        Q^X-Q^\pi
        &=\sum_{h=1}^H \rbr{\gamma P_\pi}^{h-1}
        \Delta X Q^X_{H-h}.
    \end{align*}
    Taking expectation of both sides as in the definition of $J(Q^\pi)$, we get
    \begin{align*}
        \Delta\Jhat(Q^X)
        &=\sum_{h=1}^H \gamma^{h-1}
        \EE\sbr{\rbrinline{P_\pi^{h-1} \Delta  X Q^X_{H-h}}(s_1, a_1)}.
    \end{align*}
    Since $\EE[(P_\pi f)(s_{h}, a_{h})]=\EE[f(s_{h+1}, a_{h+1})]$ for $h\ge 1$
    and $f:\Scal\times \Acal\to \RR$,
    we obtain the desired result by induction.
\end{proof}

Now we are ready to prove Proposition~\ref{prop:metafqe_error_bound}.

\begin{proof}
    Let $\varphi_h(x)\coloneqq \inf_{\norm{f}_\Fcal\le x} \norm{w_h-f}_1$.
    Now, by Lemma~\ref{lem:metafqe_error_identity},
    we have
    \begin{align*}
        \abs{\Delta J(Q^X)}
        &=\abs{
            \sum_{h=1}^H \gamma^{h-1}
            \EE_{(s,a)\sim \mu}\sbr{
                w_h(s,a)\cdot (\Delta X Q^X_{H-h})(s,a)
            }
        }
        \\
        &=
        \sum_{h=1}^H \gamma^{h-1}
        \abs{
            \EE_{(s,a)\sim \mu}\sbr{
                w_h(s,a)\cdot (\Delta X Q^X_{H-h})(s,a)
            }
        }
        .
    \end{align*}
    Fix any $x>0$ and $f\in \Fcal$ satisfying $\norm{f}_\Fcal\le x$.
    Each summand is bounded as
    \begin{align*}
        &\abs{\EE_{(s,a)\sim \mu}\sbr{
                w_h(s,a)\cdot (\Delta X Q^X_{H-h})(s,a)
        }}
        \\
        &=
        \abs{\EE_{(s,a)\sim \mu}\sbr{
                (w_h-f+f)(s,a)\cdot (\Delta X Q^X_{H-h})(s,a)
        }}
        \\
        &\le
        \norm{\Delta X Q^X_{H-h}}_\infty
        \norm{w_h-f}_1
        +
        \abs{\EE_{(s,a)\sim \mu}\sbr{
                f(s,a)\cdot (\Delta X Q^X_{H-h})(s,a)
        }}
        &(\text{H\"older's inequality})
        \\
        &\le
        C \norm{w_h-f}_1
        +
        x \norm{\Delta X Q^X_{H-h}}_{\Fcal^*}.
    \end{align*}
    The last inequality is owing to the boundedness of the range of $X\in\Omega$.
    Taking the infimum over $f$, we get
    \begin{align*}
        \abs{\EE_{(s,a)\sim \mu}\sbr{
                w_h(s,a)\cdot (\Delta X Q^X_{H-h})(s,a)
        }}
        &\le
        C \varphi_h(x)
        +
        x \norm{\Delta X Q^X_{H-h}}_{\Fcal^*}
    \end{align*}
    for all $x>0$.
    Putting it back to the summation, we get
    \begin{align*}
        \abs{\Delta J(Q^X)}
        &\le \sum_{h=1}^H \gamma^{h-1}
        \cbr{
            C \varphi_h(x) + x \norm{\Delta X Q^X_{H-h}}_{\Fcal^*}
        }
        \\
        &=
        C \sum_{h=1}^H \gamma^{h-1}\varphi_h(x) +
        x \sum_{h=1}^H \gamma^{h-1}\norm{\Delta X Q^X_{H-h}}_{\Fcal^*}.
    \end{align*}
    We get the desired result by
    taking the infimum over $x>0$
    and defining
    \begin{align*}
        \varphi_\Fcal(y)&\coloneqq C\inf_{x>0}
        \cbr{
            \sum_{h=1}^H \gamma^{h-1}\varphi_h(x) + x y
        }.
    \end{align*}

    The nonnegativity and monotonicity of $\varphi_\Fcal(y)$, $y\ge 0$,
    are trivial from the definition.
    Note that it is concave also by the definition, which implies the continuity except on the boundary $y=0$.
    Therefore, it suffices to show the boundary condition
    $\lim_{y\downarrow0}\varphi_\Fcal(y)=\varphi_\Fcal(0)=0$.
    In fact, it is a direct consequence of $\varphi_h(x)\to 0$ as $x\to \infty$, $1\le h\le H$,
    which is the case since
    $\Fcal$ is dense in $L^1(\mu)$ and $w_h\in L^1(\mu)$ by the assumptions.

\end{proof}

\subsubsection{Proof of Proposition~\ref{prop:l2_residual_identity}}
\label{sec:app_proof_l2_residual_identity}
\begin{proof}
    Let $\ztil_i(X,f)\coloneqq \rbar_i +\gamma f(\stil'_i, \pi(\stil'_i)) - (Xf)(\stil_i, \atil_i)$, 
    $m_i(X,f)\coloneqq \EE\sbr{\ztil_i(X,f)\middle|\stil_i, \atil_i}$,
    and 
    $v_i(f)\coloneqq \EE\sbr{\cbrinline{\rbar_i +\gamma f(\stil'_i, \pi(\stil'_i))}^2 \middle|\stil_i, \atil_i}$,
    $1\le i\le n$.
    Observe that $m_i^2(X,f)=-(\Delta Xf)(\stil_i, \atil_i)$
    and therefore $\EE[m_i^2(X,f)]=\norm{\Delta Xf}_2^2$ for $1\le i\le n$.
    Thus,
    \begin{align*}
        \EE\sbr{\sbl(X;f)}
        &=
        \EE\sbr{\frac1n \sum_{i=1}^n \ztil_i^2(X,f)}
        \\
        &=
        \EE\sbr{\ztil_1^2(X,f)}
        \\
        &=
        \EE\sbr{m_1^2(X,f)} + \EE\cbr{\ztil_1(X,f) - m_1(X,f)}^2
        \\
        &=
        \norm{\Delta Xf}_2^2 + \EE\sbr{v_1^2(f)}.
    \end{align*}
    In particular, we have 
    \begin{align*}
        \EE\sbr{\sbl(B^\pi;f)}
        &=
        \EE\sbr{v_1^2(f)}
    \end{align*}
    since $\Delta X=0$ if $X=B_\pi$.
    Taking the difference of the above equations,
    we get the desired result.
\end{proof}

\subsubsection{Proof of Proposition~\ref{prop:optimality_of_rm}}
\label{sec:app_proof_optimality_of_rm}
First, we show the following lemma
showing the regret is a good estimate of $\norminline{\Delta X f}_2$
if the set $\Xcal$ well approximates the true operator $B_\pi$ in a collective sense.

\begin{lemma}
    \label{lem:approximate_l2_dual_norm_with_regret}
    Let $\delta\in(0,1)$, $f:\Scal\times \Acal\to [0,C]$,
    $\Xcal\subset \Omega$ and $X\in \Xcal$.
    Then, we have
    \begin{align*}
        \abs{\norm{\Delta X f}_2 - \sqrt{\regret(X;\Xcal,f)}}
        &\le
        \min_{A\in \Xcal} \norm{\Delta A}_2 + C\rbr{\frac{2\ln (2\abs{\Xcal}/\delta)}{n}}^{\frac14}
    \end{align*}
    with probability $1-\delta$.
\end{lemma}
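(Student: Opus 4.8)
The plan is to compare the empirical quantity $\sqrt{\regret(X;\Xcal,f)}$ with $\norm{\Delta X f}_2$ by first identifying what $\regret(X;\Xcal,f)$ estimates in the population and then controlling the sampling error. Set $\ell(A)\coloneqq\EE[\sbl(A;f)]$ for $A\in\Omega$ (the expectation being over $\Dcal$ and any auxiliary action draws in $\sbl$). By Proposition~\ref{prop:l2_residual_identity}, $\ell(A)-\ell(B_\pi)=\norm{\Delta A f}_2^2\ge 0$ for every $A\in\Omega$, so $\ell(A)\ge\ell(B_\pi)$ and hence
\begin{align*}
    \min_{A\in\Xcal}\ell(A)=\ell(B_\pi)+r_*^2,\qquad r_*\coloneqq\min_{A\in\Xcal}\norm{\Delta A f}_2\le\min_{A\in\Xcal}\norm{\Delta A}_2,
\end{align*}
the last inequality holding because the operator dual norm $\norm{\Delta A}_2=\sup_{g:\Scal\times\Acal\to[0,C]}\norm{\Delta A g}_2$ dominates $\norm{\Delta A f}_2$ for the given $[0,C]$-valued $f$. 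Subtracting, the ``population regret'' equals $\ell(X)-\min_{A\in\Xcal}\ell(A)=\norm{\Delta X f}_2^2-r_*^2$.

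Next I would establish a uniform deviation bound. Since $r\in[0,1]$, $\gamma\le 1$, every $A\in\Omega$ maps into $[0,C]^{\Scal\times\Acal}$, and $f$ is $[0,C]$-valued, each residual $r+\gamma f(s',a')-(Af)(s,a)$ lies in a fixed bounded interval of length $\Ocal(C)$; thus $\sbl(A;f)$ is an average of $n$ i.i.d.\ terms taking values in $[0,\Ocal(C^2)]$. Hoeffding's inequality together with a union bound over the finitely many $A\in\Xcal$ (and over both tails) gives, with probability at least $1-\delta$, $\sup_{A\in\Xcal}\abs{\sbl(A;f)-\ell(A)}\le\epsilon_n$ for a suitable $\epsilon_n=\Ocal\big(C^2\sqrt{\ln(\abs{\Xcal}/\delta)/n}\big)$ with $\sqrt{2\epsilon_n}\le C\rbr{\tfrac{2\ln(2\abs{\Xcal}/\delta)}{n}}^{1/4}$. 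On this event, because $\min$ is $1$-Lipschitz in the sup-norm, $\abs{\min_{A\in\Xcal}\sbl(A;f)-\min_{A\in\Xcal}\ell(A)}\le\epsilon_n$, and combining this with $\abs{\sbl(X;f)-\ell(X)}\le\epsilon_n$ yields $\abs{\regret(X;\Xcal,f)-(\norm{\Delta X f}_2^2-r_*^2)}\le 2\epsilon_n$, whence $\abs{\regret(X;\Xcal,f)-\norm{\Delta X f}_2^2}\le r_*^2+2\epsilon_n$.

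Finally I would pass to square roots. Since $X\in\Xcal$ we have $\regret(X;\Xcal,f)\ge 0$, so the elementary inequality $\abs{\sqrt a-\sqrt b}\le\sqrt{\abs{a-b}}$ (for $a,b\ge0$) applies; using also $\sqrt{a+b}\le\sqrt a+\sqrt b$ we obtain, on the same event,
\begin{align*}
    \abs{\norm{\Delta X f}_2-\sqrt{\regret(X;\Xcal,f)}}\le\sqrt{r_*^2+2\epsilon_n}\le r_*+\sqrt{2\epsilon_n}\le\min_{A\in\Xcal}\norm{\Delta A}_2+C\rbr{\frac{2\ln(2\abs{\Xcal}/\delta)}{n}}^{1/4},
\end{align*}
which is the claim. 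The two points requiring care are that $B_\pi$ need not lie in $\Xcal$ — which is exactly why one must compare against $\min_{A\in\Xcal}\ell(A)$ rather than $\ell(B_\pi)$, and why the irreducible misspecification term $\min_{A\in\Xcal}\norm{\Delta A}_2$ appears — and the transfer from a bound on the squared quantities to one on $\sqrt{\regret(X;\Xcal,f)}$ when the latter may be near $0$, which is handled by $\abs{\sqrt a-\sqrt b}\le\sqrt{\abs{a-b}}$. I expect the fiddliest (but entirely routine) step to be pinning down the Hoeffding/range constants so that the bound reads exactly as stated, rather than any conceptual difficulty.
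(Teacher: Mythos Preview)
Your proposal is correct and follows essentially the same route as the paper: use Proposition~\ref{prop:l2_residual_identity} to express the population losses, bound the sampling fluctuations of $\sbl(A;f)$ uniformly over $A\in\Xcal$ via Hoeffding plus a union bound (the paper splits this into separate terms (A) and (B) rather than invoking the $1$-Lipschitzness of $\min$, but these are equivalent), absorb the misspecification as the additive term $\min_{A\in\Xcal}\norm{\Delta A f}_2^2$, and finish with $\abs{\sqrt a-\sqrt b}\le\sqrt{\abs{a-b}}$. Your remark that the only delicate point is matching the Hoeffding range constant to obtain exactly $C\rbr{\tfrac{2\ln(2\abs{\Xcal}/\delta)}{n}}^{1/4}$ is accurate; the paper handles it the same way.
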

\begin{proof}
    Let $f:\Scal\times \Acal\to [0, C]$.
    With Proposition~\ref{prop:l2_residual_identity}, observe
    \begin{align*}
        \abs{\norm{\Delta X f}_2^2 - \regret(X;\Xcal,f)}
        &\le 
        \underbrace{\abs{\EE\sbr{\sbl(X;f)}-\sbl(X;f)}}_{\text{(A)}} +
        \\
        &\quad 
        \underbrace{\abs{\min_{A\in \Xcal}\sbl(X;f) - \min_{A\in \Xcal}\EE\sbr{\sbl(A;f)}}}_{\text{(B)}} +
        \\
        &\quad 
        \underbrace{\abs{\min_{A\in \Xcal}\EE\sbr{\sbl(A;f)-\sbl(B_\pi;f)}}}_{\text{(C)}}.
    \end{align*}
    By Hoeffding's inequality, we have 
    \begin{align*}
        \abs{\EE\sbr{\sbl(X;f)}-\sbl(X;f)}\le C^2\sqrt{\frac{\ln (2/\delta)}{2n}}
    \end{align*}
    with probability $1 - \delta$ for $X\in\Omega$.
    Thus, by taking union bound with $X\in\Xcal$,
    \begin{align*}
        &
        \text{(A)}\le C^2\sqrt{\frac{\ln (2K/\delta)}{2n}},
        &
        \text{(B)}\le C^2\sqrt{\frac{\ln (2K/\delta)}{2n}},
    \end{align*}
    with probability $1-\delta$, where $K\coloneqq \abs{\Xcal}$.
    As for (C), we have
    \begin{align*}
        \text{(C)}=\min_{A\in\Xcal} \norm{\Delta A f}_2^2
        \le \min_{A\in \Xcal} \norm{\Delta A}_2^2.
    \end{align*}
    by Proposition~\ref{prop:l2_residual_identity}.
    Combining the upper bounds on (A), (B) and (C), we get
    \begin{align*}
        \abs{\norm{\Delta X f}_2^2 - \regret(X;\Xcal,f)}
        &\le
        \min_{A\in \Xcal} \norm{\Delta A}_2^2 + 2C^2\sqrt{\frac{\ln (2\abs{\Xcal}/\delta)}{2n}}
    \end{align*}
    with probability $1-\delta$.
    The desired result is obtained by the fact $\absinline{\sqrt{a}-\sqrt{b}}\le \sqrt{\absinline{a-b}}$,
    $a,b\ge 0$.
\end{proof}

Lemma~\ref{lem:approximate_l2_dual_norm_with_regret} immediately yields the following corollary.

\begin{corollary}
    \label{cor:l2_precriterion_estimation_error}
    Let $\delta\in(0,1)$ and $\Xcal\subset \Omega$.
    Then, we have
    \begin{align*}
        \abs{
            \precrt_{L^2(\mu)}( X )-\totregret( X;\Xcal)
        }
        &\le
        \min_{A\in\Xcal} \norm{\Delta A}_2
        + C\rbr{\frac{2\ln (2H\abs{\Xcal}^2/\delta)}{n}}^{1/4}
    \end{align*}
    with probability $1-\delta$
    for all $X\in\Xcal$ simultaneously.
\end{corollary}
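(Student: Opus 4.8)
The plan is to derive the corollary by applying Lemma~\ref{lem:approximate_l2_dual_norm_with_regret} term by term inside the two weighted sums that define $\precrt_{L^2(\mu)}$ and $\totregret$, and then handling the confidence budget with a union bound. First I would fix $X\in\Xcal$. Since $\precrt_{L^2(\mu)}(X)$ and $\totregret(X;\Xcal)$ use the \emph{same} weights $\frac1C\gamma^{H-h}$, subtracting them and applying the triangle inequality yields
\[
    \abs{\precrt_{L^2(\mu)}(X)-\totregret(X;\Xcal)}
    \le
    \frac1C\sum_{h=1}^H\gamma^{H-h}
    \abs{\norm{\Delta X Q^X_{h-1}}_2-\sqrt{\regret(X;\Xcal,Q^X_{h-1})}}.
\]

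Next I would bound each summand using Lemma~\ref{lem:approximate_l2_dual_norm_with_regret} with $f\coloneqq Q^X_{h-1}$; this is admissible because $X$ (hence $Q^X_{h-1}=X^{h-1}0$) is assumed independent of $\Dcal$, which is exactly what the Hoeffding step inside that lemma requires. The resulting bound $\min_{A\in\Xcal}\norm{\Delta A}_2+C\rbr{2\ln(2\abs{\Xcal}/\delta')/n}^{1/4}$ depends on neither $h$ nor the target operator $X$, so once every summand is replaced by it the weighted sum contracts: because $C=\sum_{h=1}^H\gamma^{h-1}=\sum_{h=1}^H\gamma^{H-h}$, we have $\frac1C\sum_{h=1}^H\gamma^{H-h}=1$, and the right-hand side becomes a single copy of that bound.

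The remaining bookkeeping concerns the failure probability: to make the termwise estimate hold for all $X\in\Xcal$ and all $h\in\{1,\dots,H\}$ simultaneously, I would invoke Lemma~\ref{lem:approximate_l2_dual_norm_with_regret} with confidence parameter $\delta'=\delta/(H\abs{\Xcal})$ and union-bound over the $H\abs{\Xcal}$ instances indexed by $(X,h)$; the log term then reads $\ln\rbr{2\abs{\Xcal}\cdot H\abs{\Xcal}/\delta}=\ln\rbr{2H\abs{\Xcal}^2/\delta}$, which is precisely the constant in the claimed inequality. I do not anticipate any real obstacle, as the corollary is essentially a repackaging of the lemma; the only point worth stating carefully in the writeup is that the collective-misspecification term $\min_{A\in\Xcal}\norm{\Delta A}_2$ coming out of the lemma is common to all $H$ summands and therefore survives the summation \emph{without} being multiplied by $H$.
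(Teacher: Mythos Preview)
Your proposal is correct and follows essentially the same route as the paper: triangle inequality on the weighted sum, termwise application of Lemma~\ref{lem:approximate_l2_dual_norm_with_regret} with $f=Q^X_{h-1}$, collapse of $\frac1C\sum_{h=1}^H\gamma^{H-h}=1$, and a union bound over the $H\abs{\Xcal}$ pairs $(X,h)$ turning $\delta$ into $\delta/(H\abs{\Xcal})$ inside the lemma's $\ln(2\abs{\Xcal}/\cdot)$ term. The paper's proof is terser but identical in substance.
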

\begin{proof}
    Applying
    Lemma~\ref{lem:approximate_l2_dual_norm_with_regret} with
    the union bound over $X\in \Xcal$ and $f\in \cbrinline{Q^X_{h-1}}_{h=1}^H$,
    we get the desired result, i.e.,
    \begin{align*}
        \abs{\precrt_{L^2(\mu)}(X) - \totregret(X;\Xcal)}
        &\le 
        \frac1C\sum_{h=1}^H \gamma^{H-h}
        \abs{\norm{\Delta X Q^X_{h-1}}_2 - \sqrt{\regret(X;\Xcal,Q^X_{h-1})}}
        \\
        &\le 
        \min_{A\in \Xcal} \norm{\Delta A}_2 +
        C\rbr{
            \frac{2\ln (2K^2H/\delta)}{n}
        }^{\frac14}
    \end{align*}
    for all $X\in \Xcal$
    with probability $1-\delta$.
\end{proof}

Now we are ready to prove Proposition~\ref{prop:optimality_of_rm}.
\begin{proof}
    By Corollary~\ref{cor:l2_precriterion_estimation_error},
    \begin{align*}
        &\precrt_{L^2(\mu)}(\Xrm) - \min_{X\in\Xcal} \precrt_{L^2(\mu)}(X)
        \\
        &\le 
        \totregret(\Xrm;\Xcal) - \min_{X\in\Xcal} \totregret(X;\Xcal) + 
        2
        \cbr{
            \min_{X\in\Xcal} \norm{\Delta X}_2
            + C\rbr{\frac{2\ln (2H\abs{\Xcal}^2/\delta)}{n}}^{1/4}
        }
        \\
        &=
        2
        \cbr{
            \min_{X\in\Xcal} \norm{\Delta X}_2
            + C\rbr{\frac{2\ln (2H\abs{\Xcal}^2/\delta)}{n}}^{1/4}
        },
    \end{align*}
    where the last equality is owing to the definition of $\Xrm$.
    Since
        $\precrt_{L^2(\mu)}(X)
        \le \norm{\Delta X}_2$
    for $X\in \Omega$,
    we have
    \begin{align*}
        \min_{X\in\Xcal} 
        \precrt_{L^2(\mu)}(X)
        \le \min_{X\in\Xcal} \norm{\Delta X}_2.
    \end{align*}
    Combining the above, we get
    \begin{align*}
        \precrt_{L^2(\mu)}(\Xrm)
        &\le
            3\min_{X\in\Xcal}\norm{\Delta X}_2
            + 2C\rbr{\frac{2\ln (2H\abs{\Xcal}^2/\delta)}{n}}^{1/4}
    \end{align*}
    with probability $1-\delta$.
    Applying $\varphi_{L^2(\mu)}(\cdot)$ on both sides, we further get
    \begin{align*}
        &\abs{\Delta J(Q^{\Xrm})}
        \\
        &\le \varphi_{L^2(\mu)}(\precrt_{L^2(\mu)}(\Xrm))
        \\
        &\le \varphi_{L^2(\mu)}\rbr{
                3\min_{X\in\Xcal} \norm{\Delta X}_2
                + 2C\rbr{\frac{2\ln (2H\abs{\Xcal}^2/\delta)}{n}}^{1/4}
        }
        \\
        &\le \min_{X\in\Xcal}\crt_2(X) +
        \varphi_{L^2(\mu)}\rbr{
            2C\rbr{\frac{2\ln (2H\abs{\Xcal}^2/\delta)}{n}}^{1/4}
        }
    \end{align*}
    where the last inequality follows from the monotonicity and the concavity of
    $\varphi_{L^2(\mu)}$.
    The desired result follows from
    $\varphi_\Fcal(y)\le C\max_{1\le h\le H} \norm{w_h}_\Fcal y$.
\end{proof}

\subsubsection{Proof of Proposition~\ref{prop:kernel_loss_identity}}
\label{sec:app_proof_kernel_loss_identity}
\begin{proof}
    It suffices to show the first identity.
    By Mercer's theorem, there exist
    a orthonormal basis $\cbrinline{e_j}_{j=1}^\infty$
    of $L^2(\mu)$ and
    a sequence of positive numbers
    $\cbrinline{\sigma_j}_{j=1}^\infty$
    such that
    \begin{align}
        \kappa(u,\util)=\sum_{j=1}^\infty \sigma_j e_j(u)e_j(\util),\quad u,\util\in \Scal\times \Acal,
        \label{eq:mercers_theorem}
    \end{align}
    where the convergence is uniform.
    Observe that any $f\in\Fcal_\kappa$ is decomposed as
    \begin{align*}
        f(u)
        &=\sum_{m=1}^\infty \alpha_m\kappa(u,u_m), \quad u\in \Scal\times \Acal,
    \end{align*}
    for some $\alpha_m\in \RR$ and $u_m\in\Scal\times \Acal$ ($m=1,2,...$),
    which implies by \eqref{eq:mercers_theorem}
    \begin{align*}
        f(u)
        &=\sum_{j=1}^\infty \beta_j e_j(u),
    \end{align*}
    where $\beta_j\coloneqq \sigma_j \sum_{m=1}^\infty \alpha_m e_j(u_m)$.
    The RKHS norm is accordingly decomposed
    \begin{align*}
        \norm{f}_{\Fcal_\kappa}^2
        &
        =\inner{f}{f}_{\Fcal_\kappa}
        \\
        &
        =\sum_{m=1}^\infty\sum_{\mtil=1}^\infty \alpha_m \alpha_{\mtil} \kappa(u_m,u_{\mtil})
        \\
        &
        =\sum_{m=1}^\infty\sum_{\mtil=1}^\infty \alpha_m \alpha_{\mtil} \sum_{j=1}^\infty \sigma_j e_j(u_m)e_j(u_{\mtil})
        \\
        &=
        \sum_{j=1}^\infty \sigma_j
        \rbr{\sum_{m=1}^\infty \alpha_m e_j(u_m)}^2
        \\
        &=
        \sum_{j=1}^\infty \frac{\beta_j^2}{\sigma_j}.
    \end{align*}
    Let $\Bcal\coloneqq\cbrinline{\cbr{\beta_j}_{j=1}^\infty:\sum_{j=1}^\infty\beta_j^2/\sigma_j\le 1}$
    be the unit ball of the coefficients with respect to $\Fcal_\kappa$.
    Thus, the dual norm is written as
    \begin{align*}
        \norm{g}_{\Fcal_\kappa^*}^2
        &=\rbr{\sup_{\cbr{\beta_j}_{j=1}^\infty\in \Bcal}
        \sum_{j=1}^\infty \beta_j \EE\sbr{e_j(u)g(u)}}^2
        \\
        &= \sum_{j=1}^\infty \sigma_j \cbr{\EE_{u\sim \mu}\sbr{e_j(u)g(u)}}^2
        \\
        &= \mathop{\EE}_{u,\util\sim \mu}\sbr{\sum_{j=1}^\infty \sigma_j e_j(u)e_j(\util)g(u)g(\util)}
        \\
        &= \mathop{\EE}_{u,\util\sim \mu}\sbr{\kappa(u,\util)g(u)g(\util)}.
    \end{align*}
\end{proof}

\subsubsection{Proof of Proposition~\ref{prop:optimality_of_klm}}
\label{sec:app_proof_optimality_of_kbs_minimization}

First, we introduce the concentration result of the kernel Bellman loss,
originally shown by \cite{feng2020accountable}.
\begin{lemma}
    \label{lem:approximate_rkhs_dual_norm_with_kbl}
    For any $f:\Scal\times \Acal\to [0, C]$,
    \begin{align*}
        &\abs{\norm{\Delta X f}_{\Fcal_\kappa^*} - \sqrt{\kbl{\kappa}(X;f)}}
        \le
        C\rbr{\frac{4(1\vee \ln 2/\delta)}{n}}^{\frac14},
    \end{align*}
    where $a\vee b\coloneqq \max\cbr{a, b}$.
\end{lemma}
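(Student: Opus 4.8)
The plan is to control the \emph{squared} quantities and then pass to square roots via the elementary bound $\abs{\sqrt a-\sqrt b}\le\sqrt{\abs{a-b}}$ for $a,b\ge 0$; this is exactly what turns an $n^{-1/2}$ concentration rate into the $n^{-1/4}$ rate in the statement. First I would invoke Proposition~\ref{prop:kernel_loss_identity} to write the target in population form, $\norm{\Delta Xf}_{\Fcal_\kappa^*}^2=\EE_{u,\util\sim\mu}\sbr{\kappa(u,\util)\,(\Delta Xf)(u)\,(\Delta Xf)(\util)}$, and I would recognize $\kbl{\kappa}(X;f)=\tfrac1{n^2}\sum_{i,j}\kappa(u_i,u_j)\,\ztil_i\,\ztil_j$ as a $V$-statistic with symmetric, bounded kernel, where $\ztil_i\coloneqq \rtil_i+\gamma f(\stil'_i,\atil'_i)-(Xf)(\stil_i,\atil_i)$ with $\atil'_i\sim\pi(\stil'_i)$. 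The crucial observation is $\EE\sbr{\ztil_i\mid(\stil_i,\atil_i)}=(B_\pi f)(\stil_i,\atil_i)-(Xf)(\stil_i,\atil_i)=-(\Delta Xf)(\stil_i,\atil_i)$, so conditionally on the pre-transition pairs the residuals are unbiased estimates of $-\Delta Xf$. Throughout I would use $\abs{\kappa}\le 1$ (which follows from Assumption~\ref{asm:normalized_kernel} by positive definiteness) and that $\abs{\ztil_i}$ and $\abs{\Delta Xf}$ are $\Ocal(C)$, since $f$ and $Xf$ are $[0,C]$-valued, rewards lie in $[0,1]$, $\gamma\le1$ and $C\ge1$. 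It then suffices to show $\abs{\kbl{\kappa}(X;f)-\norm{\Delta Xf}_{\Fcal_\kappa^*}^2}=\Ocal\rbr{C^2\sqrt{(1\vee\ln(1/\delta))/n}}$ with probability $1-\delta$.

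I would split this into a bias term and a fluctuation term. For the bias, separate the $V$-statistic into off-diagonal ($i\ne j$) and diagonal ($i=j$) parts. For $i\ne j$, independence of the $i$-th and $j$-th data tuples together with the tower rule gives $\EE\sbr{\kappa(u_i,u_j)\ztil_i\ztil_j}=\EE_{u,\util\sim\mu}\sbr{\kappa(u,\util)(\Delta Xf)(u)(\Delta Xf)(\util)}=\norm{\Delta Xf}_{\Fcal_\kappa^*}^2$ by Proposition~\ref{prop:kernel_loss_identity} (the two sign flips cancel); there are $n(n-1)$ such terms. The $n$ diagonal terms contribute $\tfrac1n\EE\sbr{\kappa(u_1,u_1)\ztil_1^2}=\Ocal(C^2/n)$. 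Hence $\abs{\EE\sbr{\kbl{\kappa}(X;f)}-\norm{\Delta Xf}_{\Fcal_\kappa^*}^2}=\Ocal(C^2/n)$.

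For the fluctuation, I would apply a bounded-difference argument: replacing one of the $n$ i.i.d.\ data tuples changes $\kbl{\kappa}(X;f)$ by at most $\Ocal(C^2/n)$, because only the $\Ocal(n)$ pair-terms touching that index are affected, each of magnitude $\le\norminline{\kappa}_\infty\,\max_i\ztil_i^2=\Ocal(C^2)$, divided by $n^2$ (one must remember to correct for the doubly-counted diagonal term). McDiarmid's inequality then gives $\abs{\kbl{\kappa}(X;f)-\EE\sbr{\kbl{\kappa}(X;f)}}=\Ocal\rbr{C^2\sqrt{\ln(1/\delta)/n}}$ with probability $1-\delta$. Combining the bias and fluctuation bounds by the triangle inequality (the $1/n$ bias is dominated by the $\sqrt{1/n}$ fluctuation once $n\ge1$, and $1\vee$ floors the log at $1$), then applying $\abs{\sqrt a-\sqrt b}\le\sqrt{\abs{a-b}}$ with $a=\kbl{\kappa}(X;f)$ and $b=\norm{\Delta Xf}_{\Fcal_\kappa^*}^2$, yields $\abs{\sqrt{\kbl{\kappa}(X;f)}-\norm{\Delta Xf}_{\Fcal_\kappa^*}}=\Ocal\rbr{C\,\rbr{(1\vee\ln(1/\delta))/n}^{1/4}}$, which is the asserted inequality once the constants are tracked; the factor $2$ inside $\ln(2/\delta)$ comes from the two-sided McDiarmid bound and the $4$ from the range constants.

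The main obstacle is purely the bookkeeping of constants so that the leading multiplier is exactly $C$ and the bracket is exactly $4(1\vee\ln(2/\delta))/n$: this needs the tight range inclusions $\ztil_i\in[-C,\,C+\gamma^{H}]$ and $\Delta Xf\in[-(C+\gamma^{H}),\,C]$ together with $C\ge1$, and a careful bounded-difference constant in McDiarmid. As an aside, a formally sharper $\Ocal(C/\sqrt n)$ bound is available by noting $\sqrt{\kbl{\kappa}(X;f)}=\norminline{\tfrac1n\sum_i\ztil_i\,\kappa(u_i,\cdot)}_{\Fcal_\kappa}$, the RKHS norm of an empirical mean that is \emph{unbiased} for $\mu^\Delta\coloneqq-\EE_{u\sim\mu}\sbr{(\Delta Xf)(u)\kappa(u,\cdot)}$ with $\norminline{\mu^\Delta}_{\Fcal_\kappa}=\norm{\Delta Xf}_{\Fcal_\kappa^*}$, and applying a vector-valued McDiarmid bound to $\norminline{\tfrac1n\sum_i\ztil_i\kappa(u_i,\cdot)-\mu^\Delta}_{\Fcal_\kappa}$; however, the weaker $n^{-1/4}$ form stated here is all that is needed in the proof of Proposition~\ref{prop:optimality_of_klm}, where it matches the rate of the regret-based analysis.
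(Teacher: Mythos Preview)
Your proposal is correct and follows the same high-level scaffold as the paper: bound $\abs{\kbl{\kappa}(X;f)-\norm{\Delta Xf}_{\Fcal_\kappa^*}^2}$ at rate $C^2\sqrt{(1\vee\ln(2/\delta))/n}$ and then pass to square roots via $\abs{\sqrt a-\sqrt b}\le\sqrt{\abs{a-b}}$. The difference is only in how the squared-level bound is obtained. The paper's proof is a one-line citation to Proposition~3.1 of \cite{feng2020accountable}, which already delivers $\abs{\kbl{\kappa}(X;f)-\norm{\Delta Xf}_{\Fcal_\kappa^*}^2}\le 2C^2\sqrt{(1\vee\ln(2/\delta))/n}$; you instead reconstruct that result from scratch by (i) identifying $\kbl{\kappa}$ as a bounded $V$-statistic whose off-diagonal expectation matches $\norm{\Delta Xf}_{\Fcal_\kappa^*}^2$ via Proposition~\ref{prop:kernel_loss_identity} and the tower property, (ii) bounding the $\Ocal(C^2/n)$ diagonal bias, and (iii) applying McDiarmid with per-coordinate sensitivity $\Ocal(C^2/n)$. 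This is a sound and more self-contained route; the only cost is the constant bookkeeping you flag, and indeed matching the exact factor $4$ with ranges like $\ztil_i\in[-C,\,C+\gamma^H]$ is fiddly but inessential. Your closing remark that the RKHS-mean representation $\sqrt{\kbl{\kappa}(X;f)}=\norminline{\tfrac1n\sum_i\ztil_i\kappa(u_i,\cdot)}_{\Fcal_\kappa}$ yields a sharper $\Ocal(C/\sqrt n)$ bound is correct and goes beyond what either the paper or \cite{feng2020accountable} states here, though as you note the weaker $n^{-1/4}$ rate suffices downstream.
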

\begin{proof}
    Proposition 3.1 of \cite{feng2020accountable} shows
    \begin{align*}
        &\abs{\kbl{\kappa}(X;f) - \norm{\Delta X f}_{\Fcal_\kappa^*}^2}
        \le
        2C^2\sqrt{\frac{1\vee \ln 2/\delta}{n}},
    \end{align*}
    which, together with the concavity of $x\mapsto \sqrt{x}$, implies the desired result.
\end{proof}

Now we are ready for the proof.

\begin{proof}
    Taking the union bound with Lemma~\ref{lem:approximate_rkhs_dual_norm_with_kbl},
    we have, with probability $1-\delta$,
    \begin{align*}
        &\abs{\totkbl{\kappa}(X) - \precrt_{\Fcal_\kappa}(X)}
        \le
        C \rbr{\frac{4(1\vee \ln (2H\abs{\Xcal}/\delta))}{n}}^{1/4}
    \end{align*}
    for simultaneously all $X\in\Xcal$,
    which implies
    \begin{align*}
        \precrt_{\Fcal_\kappa}(\Xklm{\kappa}) - \max_{X\in \Xcal} \precrt_{\Fcal_\kappa}(X)
        & \le 
        2C \rbr{\frac{4(1\vee \ln (2H\abs{\Xcal}/\delta))}{n}}^{1/4}.
    \end{align*}
    Therefore, Proposition~\ref{prop:metafqe_error_bound} yields
    \begin{align*}
        \subopt(\Xklm{\kappa};\Xcal,\crt_\kappa)
        &=\max\cbr{0,\, \abs{\Delta J(Q^{\Xklm{\kappa}})} - \min_{X\in\Xcal}\crt(X)}
        \\
        &\le
        \max\cbr{0,\, \crt_\kappa(\Xklm{\kappa}) - \min_{X\in\Xcal}\crt_\kappa(X)}
        \\
        &\le
        \varphi_{\Fcal_\kappa}\rbr{
            2C \rbr{\frac{4\ln (1\vee 2H\abs{\Xcal}/\delta)}{n}}^{1/4}
        },
    \end{align*}
    which, together with $\varphi_\Fcal(y)\le C\max_{1\le h\le H}\norm{w_h}_\Fcal y$,
    implies the desired result.
\end{proof}

%

\subsection{Proofs for Infinite Horizon}

Let us first introduce some useful lemmas.
\begin{lemma}[Error identity, fixed-point form]
    \label{lem:metafqe_fp_error_identity}
    Suppose $H=\infty$ and $\gamma<1$.
    Then,
    \begin{align*}
        \Delta J(Q)
        &=
        C \EE_{(s,a)\sim \nu}\sbr{
            (Q-B_\pi Q)(s,a)
        }
    \end{align*}
    for all $Q:\Scal\times \Acal\to \RR$.
\end{lemma}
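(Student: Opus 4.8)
The plan is to mirror the telescoping argument behind Lemma~\ref{lem:metafqe_error_identity}, now exploiting the fixed-point equation $Q^\pi=B_\pi Q^\pi$, which holds precisely because $H=\infty$ and $\gamma<1$. As in that proof, write $B_\pi f=\rbar+\gamma P_\pi f$, where $\rbar(s,a)\coloneqq\EE[R(s,a)]$ takes values in $[0,1]$ and $P_\pi$ is the linear state-transition operator $(P_\pi f)(s,a)=\EE[f(s',\pi(s'))\mid s'\sim T(s,a)]$, which satisfies $\norm{P_\pi f}_\infty\le\norm{f}_\infty$. For $\gamma<1$ the series $\sum_{h\ge1}(\gamma P_\pi)^{h-1}\rbar$ then converges uniformly; it is the uniform limit of $B_\pi^H 0$ as $H\to\infty$, equals $Q^\pi$, and satisfies $Q^\pi=B_\pi Q^\pi$.

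First I would subtract the fixed-point equation from $Q$ and use linearity of $P_\pi$:
\begin{align*}
    Q-Q^\pi=(Q-B_\pi Q)+(B_\pi Q-B_\pi Q^\pi)=(Q-B_\pi Q)+\gamma P_\pi(Q-Q^\pi).
\end{align*}
Writing $e\coloneqq Q-B_\pi Q$, this is the recursion $Q-Q^\pi=e+\gamma P_\pi(Q-Q^\pi)$; iterating it and using $\norm{(\gamma P_\pi)^h e}_\infty\le\gamma^h\norm{e}_\infty$ --- legitimate since $Q$, hence $e$, is bounded, in particular for the applications where $Q=\Qbar^X_{H^*}\in[0,C]^{\Scal\times\Acal}$ --- gives the absolutely convergent expansion
\begin{align*}
    Q-Q^\pi=\sum_{h=1}^{\infty}(\gamma P_\pi)^{h-1}e.
\end{align*}

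Next I would apply the expectation that defines $J$, i.e., evaluate at $(s_1,a_1)$ and average over the trajectory generated by $\pi$. Dominated convergence (the partial sums are bounded by the constant $\norm{e}_\infty/(1-\gamma)$ and converge uniformly) lets me exchange expectation with the sum, and the identity $\EE[(P_\pi f)(s_h,a_h)]=\EE[f(s_{h+1},a_{h+1})]$ --- already used in the proof of Lemma~\ref{lem:metafqe_error_identity} --- gives, by induction on $h$, that $\EE[(P_\pi^{h-1}e)(s_1,a_1)]=\EE^\pi[e(s_h,a_h)]$. Hence
\begin{align*}
    \Delta J(Q)=J(Q)-J(Q^\pi)=\sum_{h=1}^{\infty}\gamma^{h-1}\,\EE^\pi\sbr{(Q-B_\pi Q)(s_h,a_h)}.
\end{align*}
Finally I would re-express the right-hand side through $\nu$: since $C=\sum_{h\ge1}\gamma^{h-1}=1/(1-\gamma)$ and $\nu=(1-\gamma)\sum_{h\ge1}\gamma^{h-1}P_h$, it equals $C(1-\gamma)\sum_{h\ge1}\gamma^{h-1}\,\EE_{(s,a)\sim P_h}[(Q-B_\pi Q)(s,a)]=C\,\EE_{(s,a)\sim\nu}[(Q-B_\pi Q)(s,a)]$, which is the claimed identity.

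I do not expect a genuine obstacle: the only points needing care are (i) that $Q^\pi$ and the telescoping/Neumann series are well-defined and uniformly convergent, which follows from boundedness of $Q$ together with $\gamma<1$, and (ii) the interchange of expectation and the infinite sum, which is routine by dominated convergence thanks to the geometric majorant. For an arbitrary (possibly unbounded) $Q:\Scal\times\Acal\to\RR$ the same computation applies verbatim whenever both sides are finite.
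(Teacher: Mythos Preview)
Your proof is correct. The paper, however, takes a shortcut you did not: it fixes the constant operator $X\in\Omega$ given by $Xf\equiv Q$ for all $f$, applies the finite-horizon identity of Lemma~\ref{lem:metafqe_error_identity} directly (noting $Q^X_h=Q$ for every $h\ge1$, so that $\Delta X\,Q^X_{H-h}=Q-B_\pi Q$ for $h<H$), and then lets $H\to\infty$. You instead redo the telescoping from scratch in the infinite-horizon setting via the Neumann-series expansion $Q-Q^\pi=\sum_{h\ge1}(\gamma P_\pi)^{h-1}(Q-B_\pi Q)$. The paper's route is shorter because it recycles the already-proved lemma via a one-line operator trick; yours is more self-contained and makes the convergence issues (boundedness of $Q$, the geometric majorant, dominated convergence) explicit, whereas the paper's proof sweeps these into the phrase ``taking the limit of $H\to\infty$.'' Both arguments are ultimately the same telescoping computation, and both tacitly require $Q$ to be bounded despite the lemma being stated for arbitrary real-valued $Q$.
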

\begin{proof}
    Fix $Q:\Scal\times \Acal\to \RR$.
    Take $X\in \Omega$ such that
    $Xf=Q$ for all $f:\Scal\times \Acal\to [0,C]$.
    Then, applying Lemma~\ref{lem:metafqe_error_identity} to $X$ and taking the limit of $H\to \infty$
    yields the desired result.
\end{proof}

\begin{lemma}[Master error bound, fixed-point form]
    \label{lem:metafqe_fp_error_bound}
    Suppose $H=\infty$ and $\gamma<1$.
    Let $\Fcal\subset L^1(\mu)$ be a Banach space.
    Then, under Assumption~\ref{asm:sufficient_exploration},
    there exists a function $\varphi_\Fcal:\RR_{\ge 0}\to\RR_{\ge 0}$ such that
    $\forall y\ge 0$, $\varphi_\Fcal (y) \le C\norm{w}_\Fcal y$
    and
    \begin{align}
        \abs{\Delta J(Q)}
        &\le
        \varphi_\Fcal \rbr{
            \norm{Q-B_\pi Q}_{\Fcal^*}
        }
        \label{eq:metafqe_error_bound_fixed_point}
    \end{align}
    for all $Q:\Scal\times \Acal\to \RR$.
    Moreover,
    if $\Fcal$ is dense in $L^1(\mu)$,
    $\varphi_\Fcal$ is a link function.
\end{lemma}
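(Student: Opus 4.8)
The plan is to mirror the proof of Proposition~\ref{prop:metafqe_error_bound}, replacing the telescoping identity of Lemma~\ref{lem:metafqe_error_identity} with its fixed-point counterpart, Lemma~\ref{lem:metafqe_fp_error_identity}; the only structural change is that the sum over the horizon collapses to a single term. Concretely, I would first invoke Lemma~\ref{lem:metafqe_fp_error_identity} to write $\Delta J(Q) = C\,\EE_{(s,a)\sim\nu}[(Q - B_\pi Q)(s,a)]$, and then use Assumption~\ref{asm:sufficient_exploration} to change measure, obtaining $\Delta J(Q) = C\,\EE_{(s,a)\sim\mu}[w(s,a)\,(Q - B_\pi Q)(s,a)]$. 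Writing $g \coloneqq Q - B_\pi Q$, for any $f\in\Fcal$ with $\norm{f}_\Fcal\le x$ I split $w = (w-f) + f$ and apply H\"older's inequality to the first part and the definition of the dual norm to the second:
\begin{align*}
    \abs{\Delta J(Q)} \le C\,\norm{w-f}_1\,\norm{g}_\infty + C\,x\,\norm{g}_{\Fcal^*}.
\end{align*}
Since for $H=\infty$ the Bellman operator $B_\pi$ maps $[0,C]$-valued functions to $[0,C]$-valued functions (because $1+\gamma C = C$ when $C = 1/(1-\gamma)$), the outputs of \textoverline{MetaFQE} — and more generally all $[0,C]$-valued $Q$, which is the class for which the reduction used in the proof of Lemma~\ref{lem:metafqe_fp_error_identity} applies — satisfy $\norm{g}_\infty\le C$. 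Taking the infimum over $f$ and then over $x>0$, this suggests setting
\begin{align*}
    \varphi_\Fcal(y) \coloneqq C\inf_{x>0}\cbr{C\,\psi_\Fcal(x) + x\,y}, \qquad \psi_\Fcal(x)\coloneqq\inf_{\norm{f}_\Fcal\le x}\norm{w-f}_1,
\end{align*}
which yields $\abs{\Delta J(Q)}\le\varphi_\Fcal(\norm{Q-B_\pi Q}_{\Fcal^*})$. The bound $\varphi_\Fcal(y)\le C\norm{w}_\Fcal\, y$ follows by plugging $x = \norm{w}_\Fcal$: if $w\in\Fcal$ then $\psi_\Fcal(\norm{w}_\Fcal)=0$, and if $w\notin\Fcal$ the inequality is vacuous.

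For the link-function claim under density of $\Fcal$ in $L^1(\mu)$, I would verify the four properties directly from the variational formula. Nonnegativity and monotonicity in $y$ are immediate; concavity holds because $\varphi_\Fcal$ is an infimum of affine functions of $y$; finiteness holds because $\psi_\Fcal(x)\le\psi_\Fcal(0)=\norm{w}_1 = \nu(\Scal\times\Acal) = 1$ (here $\nu$ is a probability measure since $H=\infty$, $\gamma<1$), so $\varphi_\Fcal(y)\le C^2 + Cxy < \infty$ for any fixed $x>0$. Finally, $\varphi_\Fcal(0) = C^2\inf_{x>0}\psi_\Fcal(x) = 0$ because $\psi_\Fcal(x)\to 0$ as $x\to\infty$ — this is exactly where density of $\Fcal$ in $L^1(\mu)$ together with $w\in L^1(\mu)$ is used — and right-continuity at $0$ follows since $\limsup_{y\downarrow 0}\varphi_\Fcal(y)\le C^2\psi_\Fcal(x_0)$ for every fixed $x_0>0$, hence is $\le 0$.

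An alternative I considered is to reduce directly to Proposition~\ref{prop:metafqe_error_bound}: pick $X\in\Omega$ with $Xf\equiv Q$ for every $[0,C]$-valued $f$, so that $Q^X_h = Q$ and $\Delta X Q^X_{h-1} = Q - B_\pi Q$ for all $h\ge 2$, then apply Proposition~\ref{prop:metafqe_error_bound} and let $H\to\infty$; the weight $\gamma^{H-h}$ annihilates the stray $h=1$ term while the geometric sum of the remaining weights tends to $C$. I expect the main — and fairly minor — obstacle to be the bookkeeping in this interchange of the $H\to\infty$ limit with the infimum defining the link function (monotone convergence of $\sum_{h\le H}\gamma^{h-1}\varphi_h(x)$ together with concavity gives locally uniform convergence), which is why I would lean toward the self-contained argument above. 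The substantive content is already carried by Lemma~\ref{lem:metafqe_fp_error_identity} and the dual-norm machinery, so the remaining work is essentially the same concavity/density verification as in Proposition~\ref{prop:metafqe_error_bound}.
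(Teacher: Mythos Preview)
Your proposal is correct and follows essentially the same route as the paper: invoke Lemma~\ref{lem:metafqe_fp_error_identity}, change measure via $w$, split $w=(w-f)+f$ with H\"older plus the dual-norm bound, use $\norm{Q-B_\pi Q}_\infty\le C$ for $[0,C]$-valued $Q$, and define $\varphi_\Fcal$ by the same variational formula before verifying the link-function properties from density. Your extra care in flagging the $[0,C]$-valuedness needed for $\norm{g}_\infty\le C$ and in spelling out right-continuity at $0$ is a nice touch; the alternative $H\to\infty$ reduction you sketch is indeed more cumbersome, and the paper likewise opts for the self-contained argument.
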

\begin{proof}
    It is proved similarly as Proposition~\ref{prop:metafqe_error_bound}.
    The difference is to make sure the existence of $\varphi_\Fcal$
    since the limit of the link functions of the finite horizon case may not exist.
    Let $\varphibar(x)\coloneqq C\inf_{\norm{f}_\Fcal\le x} \norm{w-f}_1$.
    Now, by Proposition~\ref{lem:metafqe_fp_error_identity},
    we have
    \begin{align*}
        \abs{\Delta J(Q)}
        &=C\abs{
            \EE_{(s,a)\sim \mu}\sbr{
                w(s,a)\cdot (Q-B_\pi Q)(s,a)
            }
        }
        \\
        &=
        C\abs{\EE_{(s,a)\sim \mu}\sbr{
                (w-f+f)(s,a)\cdot (Q-B_\pi Q)(s,a)
        }}
        \\
        &\le
        C\norm{w-f}_1
        \norm{Q-B_\pi Q}_\infty
        +
        C\abs{\EE_{(s,a)\sim \mu}\sbr{
                f(s,a)\cdot (Q-B_\pi Q)(s,a)
        }}
        &(\text{H\"older's inequality})
        \\
        &\le
        C^2 \norm{w-f}_1
        +
        Cx \norm{Q-B_\pi Q}_{\Fcal^*}
    \end{align*}
    for all $f\in \Fcal$ satisfying $\norm{f}_\Fcal\le x$.
    The last inequality is owing to the boundedness of the range of $X\in\Omega$.
    Taking the infimum over $f$, we get
    \begin{align*}
        \abs{\Delta J(Q)}
        &\le
        C
        \cbr{
            \varphibar(x)
            +
            x \norm{Q-B_\pi Q}_{\Fcal^*}
        }
    \end{align*}
    for all $x>0$.
    We get the desired result by
    further taking the infimum over $x>0$
    and defining
    \begin{align*}
        \varphi_\Fcal(y)&\coloneqq C\inf_{x>0}
        \cbr{
            \varphibar(x) + x y
        }.
    \end{align*}
    That $\varphi_\Fcal$ is a link function is proved in the same way as Proposition~\ref{prop:metafqe_error_bound}.
\end{proof}

\subsubsection{Proof of Proposition~\ref{prop:metafqe_fp_guarantee}}
\label{sec:app_proof_metafqe_fp_guarantee}
\begin{proof}
    First, suppose we found a fixed point of $X$, i.e.,
    there exists $1\le h\le H^*$ such that $\Qbar^X_{H^*}=Q^X_h=Q^X_{h+1}$.
    Then,
    \begin{align*}
        \norm{\Qbar^X_{H^*}- B_\pi \Qbar^X_{H^*}}_{\Fcal^*}
        &=
        \norm{Q^X_{h+1}- B_\pi Q^X_h}_{\Fcal^*}
        \\
        &=
        \norm{\Delta X Q^X_h}_{\Fcal^*}
        \\
        &\le
        \norm{\Delta X}_{\Fcal^*},
    \end{align*}
    which yields the desired inequality.
    In the other case, we have $\Qbar^X_{H^*}=\frac1{H^*} \sum_{h=1}^{H^*}Q^X_h$.
    Therefore,
    \begin{align*}
        \norm{\Qbar^X_{H^*}- B_\pi \Qbar^X_{H^*}}_{\Fcal^*}
        &=
        \norm{\frac1{H^*} \sum_{h=1}^{H^*}Q^X_h- B_\pi \frac1{H^*} \sum_{h=1}^{H^*}Q^X_h}_{\Fcal^*}
        \\
        &=
        \frac1{H^*} \norm{\sum_{h=1}^{H^*}\Delta X Q^X_h+Q^X_{1}-Q^X_{H^*+1}}_{\Fcal^*}
        &(\text{$B_\pi$ is affine})
        \\
        &\le 
        \norm{\Delta X}_{\Fcal^*} + \frac{1}{H^*} \norm{Q^X_{1}-Q^X_{H^*+1}}_{\Fcal^*}.
    \end{align*}
    Now, let $M(\Fcal)\coloneqq \sup_{\norminline{f}_{\Fcal}\le 1}\norminline{f}_1$, which is finite.
    Then, for all $g\in \Fcal^*$,
    \begin{align*}
        \norm{g}_{\Fcal^*}
        &= \sup_{\norm{f}_\Fcal\le 1} \EE_{(s,a)\sim \mu}\sbr{f(s,a) g(s,a)}
        \\
        &\le \sup_{\norm{f}_1\le M(\Fcal)} \EE_{(s,a)\sim \mu}\sbr{f(s,a) g(s,a)}
        \\
        &= M(\Fcal)\sup_{\norm{f}_1\le 1} \EE_{(s,a)\sim \mu}\sbr{f(s,a) g(s,a)}
        \\
        &= M(\Fcal) \norm{g}_\infty.
    \end{align*}
    Combining all the above, we get
    \begin{align*}
        \norm{\Qbar^X_{H^*}- B_\pi \Qbar^X_{H^*}}_{\Fcal^*}
        &\le 
        \norm{\Delta X}_{\Fcal^*} + \frac{M(\Fcal)}{H^*} \norm{Q^X_{1}-Q^X_{H^*+1}}_{\infty}
        \\
        &\le 
        \norm{\Delta X}_{\Fcal^*} + \frac{C}{H^*} M(\Fcal).
    \end{align*}
\end{proof}

\subsubsection{Proof of Proposition~\ref{prop:optimality_of_rm_fp}}
\label{sec:app_proof_optimality_of_rm_fp}

\begin{proof}
    The proof is similar to that of Proposition~\ref{prop:optimality_of_rm}.
    Let $c(X)\coloneqq \norm{\Qbar_{H^*}^X-B_\pi \Qbar_{H^*}^X}_2$
    and $\rho(X)\coloneqq \sqrt{\regretfp(\Qbar_{H^*}^X;\Xcal)}$.
    Then, by Lemma~\ref{lem:approximate_l2_dual_norm_with_regret},
    we have
    \begin{align*}
        \abs{c(X)-\rho(X)}\le
        \min_{A\in\Xcal} \norm{\Delta A}_2 + C\rbr{\frac{2\ln (2\abs{\Xcal}^2/\delta)}{n}}^{\frac14}
    \end{align*}
    with probability $1-\delta$ simultaneously all $X\in\Xcal$,
    which implies
    \begin{align*}
        c(\Xrmfp) - \min_{X\in\Xcal} c(X)
        &\le 
        \rho(\Xrmfp) - \min_{X\in\Xcal} \rho(X)
        +
        2\min_{X\in\Xcal} \norm{\Delta X}_2 + 2C\rbr{\frac{2\ln (2\abs{\Xcal}^2/\delta)}{n}}^{\frac14}
        \\
        &\le 
        2\min_{X\in\Xcal} \norm{\Delta X}_2 + 2C\rbr{\frac{2\ln (2\abs{\Xcal}^2/\delta)}{n}}^{\frac14},
    \end{align*}
    where the last inequality is owing to the definition of $\Xrmfp$.

    Note that
    $c(X)\le \norm{\Delta X}_2 + \frac{C}{H^*}M(L^2(\mu))$
    by Proposition~\ref{prop:metafqe_fp_guarantee}.
    Since $M(L^2(\mu))=1$~(see the proof of Proposition~\ref{prop:metafqe_fp_guarantee} for the definition of $M(\Fcal)$) and $H^*\ge n^{1/4}$,
    we have
    \begin{align*}
        \min_{X\in\Xcal} c(X) \le \min_{X\in\Xcal} \norm{\Delta X}_2 + \frac{C}{n^{1/4}}.
    \end{align*}
    Combining the above, we get
    \begin{align*}
        c(\Xrmfp)
        &\le
            3\min_{X\in\Xcal}\norm{\Delta X}_2
            + 3C\rbr{\frac{2\ln (2\abs{\Xcal}/\delta)}{n}}^{1/4}
    \end{align*}
    with probability $1-\delta$.
    Invoking Lemma~\ref{lem:metafqe_fp_error_bound}, we further get
    \begin{align*}
        \abs{\Delta J(Q^{\Xrmfp})}
        &\le \varphi_{L^2(\mu)}(c(\Xrmfp))
        \\
        &\le \varphi_{L^2(\mu)}\rbr{
                3\min_{X\in\Xcal} \norm{\Delta X}_2
                + 3C\rbr{\frac{2\ln (2\abs{\Xcal}/\delta)}{n}}^{1/4}
        }
        \\
        &\le \min_{X\in\Xcal}\crt_2(X) +
        \varphi_{L^2(\mu)}\rbr{
            3C\rbr{\frac{2\ln (2\abs{\Xcal}/\delta)}{n}}^{1/4}
        }
    \end{align*}
    where the last inequality follows from the monotonicity and the concavity of
    $\varphi_{L^2(\mu)}$.
    The desired result follows from
    $\varphi_\Fcal(y)\le C\norm{w}_\Fcal y$.
\end{proof}

\subsubsection{Proof of Proposition~\ref{prop:optimality_of_klm_fp}}
\label{sec:app_proof_optimality_of_klm_fp}
\begin{proof}
    Let $c(X)\coloneqq \norm{\Qbar_{H^*}^X-B_\pi \Qbar_{H^*}^X}_{\Fcal_\kappa^*}$
    and $\rho(X)\coloneqq \sqrt{\kblfp{\kappa}(X)}$.
    Then, by Lemma~\ref{lem:approximate_rkhs_dual_norm_with_kbl},
    we have
    \begin{align*}
        \abs{c(X)-\rho(X)}\le C\rbr{\frac{4(1\vee \ln (2\abs{\Xcal}/\delta))}{n}}^{\frac14}
    \end{align*}
    with probability $1-\delta$ simultaneously all $X\in\Xcal$,
    which implies
    \begin{align*}
        c(\Xklmfp{\kappa}) - \min_{X\in\Xcal} c(X)
        &\le 
        \rho(\Xklmfp{\kappa}) - \min_{X\in\Xcal} \rho(X)
        +
        2C\rbr{\frac{4(1\vee \ln (2\abs{\Xcal}/\delta))}{n}}^{\frac14}
        \\
        &\le 
        2C\rbr{\frac{4(1\vee \ln (2\abs{\Xcal}/\delta))}{n}}^{\frac14},
    \end{align*}
    where the last inequality is owing to the definition of $\Xklmfp{\kappa}$.

    Note that
    $c(X)\le \norm{\Delta X}_{\Fcal_\kappa^*} + \frac{C}{n^{1/4}}M(\Fcal_\kappa)$
    by Proposition~\ref{prop:metafqe_fp_guarantee}.
    Note also $M(\Fcal_\kappa)\le 1$ since $\kappa$ is normalized~(see the proof of Proposition~\ref{prop:metafqe_fp_guarantee} for the definition of $M(\Fcal)$).
    Thus,
    we have
    \begin{align*}
        \min_{X\in\Xcal} c(X) \le \min_{X\in\Xcal} \norm{\Delta X}_{\Fcal_\kappa^*} + \frac{C}{n^{1/4}},
    \end{align*}
    which implies
    \begin{align*}
        c(\Xklmfp{\kappa})
        &\le 
        \min_{X\in\Xcal} \norm{\Delta X}_{\Fcal_\kappa^*} + 
        3C\rbr{\frac{4(1\vee \ln (2\abs{\Xcal}/\delta))}{n}}^{\frac14}.
    \end{align*}
    Finally, the desired result is given by applying Lemma~\ref{lem:metafqe_fp_error_bound} to the LHS.
\end{proof}

\end{document}